\newcommand{\F}{\ensuremath{\mathbf{F}}}
\newcommand{\G}{\ensuremath{\mathbf{G}}}
\newcommand{\I}{\ensuremath{\mathbf{I}}}
\newcommand{\U}{\ensuremath{\mathbf{U}}}
\newcommand{\V}{\ensuremath{\mathbf{V}}}
\newcommand{\W}{\ensuremath{\mathbf{W}}}
\newcommand{\X}{\ensuremath{\mathbf{X}}}
\newcommand{\Y}{\ensuremath{\mathbf{Y}}}
\newcommand{\Z}{\ensuremath{\mathbf{Z}}}
\renewcommand{\aa}{\ensuremath{\mathbf{a}}}
\renewcommand{\c}{\ensuremath{\mathbf{c}}}
\newcommand{\e}{\ensuremath{\mathbf{e}}}
\newcommand{\f}{\ensuremath{\mathbf{f}}}
\newcommand{\h}{\ensuremath{\mathbf{h}}}
\newcommand{\p}{\ensuremath{\mathbf{p}}}
\newcommand{\w}{\ensuremath{\mathbf{w}}}
\newcommand{\x}{\ensuremath{\mathbf{x}}}
\newcommand{\y}{\ensuremath{\mathbf{y}}}
\newcommand{\z}{\ensuremath{\mathbf{z}}}
\newcommand{\0}{\ensuremath{\mathbf{0}}}
\newcommand{\blambda}{\ensuremath{\boldsymbol{\lambda}}}
\newcommand{\btheta}{\ensuremath{\boldsymbol{\theta}}}
\newcommand{\bDelta}{\ensuremath{\boldsymbol{\Delta}}}
\newcommand{\bPi}{\ensuremath{\boldsymbol{\Pi}}}
\newcommand{\bTheta}{\ensuremath{\boldsymbol{\Theta}}}
\newcommand{\bbR}{\ensuremath{\mathbb{R}}}
\newcommand{\calB}{\ensuremath{\mathcal{B}}}
\newcommand{\calC}{\ensuremath{\mathcal{C}}}
\newcommand{\calF}{\ensuremath{\mathcal{F}}}
\newcommand{\calK}{\ensuremath{\mathcal{K}}}
\newcommand{\calL}{\ensuremath{\mathcal{L}}}
\newcommand{\calX}{\ensuremath{\mathcal{X}}}
\newcommand{\calY}{\ensuremath{\mathcal{Y}}}
\newcommand{\abs}[1]{\left\lvert#1\right\rvert}
\newcommand{\norm}[1]{\left\lVert#1\right\rVert}
\newcommand{\caja}[4][1]{{%
    \renewcommand{\arraystretch}{#1}%
    \begin{tabular}[#2]{@{}#3@{}}%
      #4%
    \end{tabular}%
    }}
\DeclareMathOperator*{\argmin}{arg\,min}
\newcommand{\Eop}{\operatorname{E}}
\newcommand{\mean}[2][]{\ensuremath{\Eop_{#1}\left\{#2\right\}}}
\newcommand{\sgnop}{\operatorname{sgn}}
\newcommand{\sgn}[1]{\ensuremath{\sgnop\left(#1\right)}}
\theoremstyle{plain}
\newtheorem{thm}{Theorem}[section]
\newtheorem{lemma}[thm]{Lemma}
\newtheorem*{lemma*}{Lemma}
\newtheorem*{prop*}{Proposition}
\theoremstyle{definition}
\newtheorem*{defn*}{Definition}
\newtheorem*{exmp*}{Example}
\newtheorem*{conj*}{Conjecture}
\theoremstyle{remark}
\newtheorem{rmk}[thm]{Remark}
\newtheorem*{rmk*}{Remark}
\title{Model compression as constrained optimization, \\ with application to neural nets. \\ Part I: general framework.}
\author{
  Miguel \'A.\ Carreira-Perpi\~n\'an \\
  Electrical Engineering and Computer Science, University of California, Merced \\
  {\url{http://eecs.ucmerced.edu}}
}
\date{July 4, 2017}
\begin{document}

\maketitle

\begin{abstract}

  Compressing neural nets is an active research problem, given the large size of state-of-the-art nets for tasks such as object recognition, and the computational limits imposed by mobile devices. We give a general formulation of model compression as constrained optimization. This includes many types of compression: quantization, low-rank decomposition, pruning, lossless compression and others. Then, we give a general algorithm to optimize this nonconvex problem based on the augmented Lagrangian and alternating optimization. This results in a ``learning-compression'' algorithm, which alternates a learning step of the uncompressed model, independent of the compression type, with a compression step of the model parameters, independent of the learning task. This simple, efficient algorithm is guaranteed to find the best compressed model for the task in a local sense under standard assumptions.

  We present separately in several companion papers the development of this general framework into specific algorithms for model compression based on quantization, pruning and other variations, including experimental results on compressing neural nets and other models.

\end{abstract}

\section{Introduction}
\label{s:intro}

Large neural network models have become a central component in state-of-the-art practical implementations of the solution to various machine learning and artificial intelligence problems. These include, for example, classification problems involving images, audio or text, or reinforcement learning problems involving game playing or robot manipulation and navigation. This has also resulted in an enormous increase in the interest in deep neural net models from researchers in academy and industry, and even from non-experts and the public in general, as evidenced by the amount of scientific papers, blog entries or mainstream media articles published about it.

These practical successes in difficult problems have ocurred thanks to the availability of large-scale datasets and of massive computational power provided by GPUs, which are particularly well suited for the kind of linear algebra operations involved in training a neural net (such as stochastic gradient descent). One notable characteristic of deep neural nets that seems to distinguish them from other machine learning models is their ability to grow with the data. That is, as the size of the training set grows, we can continue to increase the (say) classification accuracy by increasing the size of the neural net (number of hidden units and of layers, and consequently the number of weights). This is unlike, for example, a linear model, whose classification accuracy will quickly stagnate as the data keeps growing. Hence, we can continue to improve the accuracy of a neural net by making it bigger and training on more data. This means that we can expect to see ever larger neural nets in future practical applications. Indeed, models reported in the literature of computer vision have gone from less than a million weights in the 1990s to millions in the 2000s and, in recent works, to models exceeding billions of weights (each a floating-point value).

The large size of the resulting model does cause an important practical problem when one intends to deploy it in a resource-constrained target device such as a mobile phone or other embedded systems. That is, the large neural net is trained in a resource-rich setting, e.g.\ GPUs and a multicore architecture with large memory and disk, where the model designer can explore different model architectures, hyperparameter values, etc.\ until a model with the best accuracy on a validation set is found. This final, large model is then ready to be deployed in practice. However, the target device will typically have far more restrictive computation constraints in memory size and speed, arithmetic operations, clock rate, energy consumption, etc., which make it impossible to accommodate the large model. In other words, we can only deploy models of a certain size. Download bandwidth is also significantly limited in apps for mobile phones or software for cars.

This problem has attracted considerable attention recently. Two important facts weigh upon it which have been known for some time among researchers. The first is that the large neural nets that we currently know how to train for optimal accuracy contain significant redundancy, which makes it possible to find smaller neural nets with comparable accuracy. The second is that, for reasons not entirely understood, one typically achieves a more accurate model by training a large model first and then somehow transforming it into a smaller one (``compressing'' the model), than by training a small model in the first place. This leads us to the problem of \emph{compressing a neural net} (or other model), which is our focus.

Compressing neural nets has been recognized in recent years as an important problem and various academic and industrial research groups have shown that one can indeed significantly compress neural nets without appreciable losses in accuracy (see related work below). However, the solutions proposed so far are somewhat ad-hoc in two senses. First, they define a specific compression technique (and a specific algorithm to find the compressed model) which may work well with some types of models but not others. Second, some of these solutions are not guaranteed to be optimal in the sense of achieving the highest classification accuracy for the compression technique considered.

In this paper, we provide a general formulation of model compression and a training algorithm to solve it. The formulation can accommodate any compression technique as long as it can be put in a certain mathematical form, which includes most existing techniques. The compression mechanism appears as a black box, so that the algorithm simply iterates between learning a large model and compressing it, but is guaranteed to converge to a locally optimal compressed model under some standard assumptions. In separate papers, we develop specific algorithms for various compression forms, such as quantization \citep{CarreirIdelbay17a} or pruning \citep{CarreirIdelbay17b}, and evaluate them experimentally. In the rest of this paper, we discuss related work (section~\ref{s:related}), give the generic formulation (section~\ref{s:formulation}) and the generic LC algorithm (section~\ref{s:LC}), give conditions for convergence (section~\ref{s:conv}) and discuss the relation of our LC algorithm with other algorithms (section~\ref{s:related-alg}) and the relation with generalization and model selection (section~\ref{s:gen}).

\section{Related work: what does it mean to compress a model?}
\label{s:related}

In a general sense, we can understand model compression as replacing a ``large'' model with a ``small'' model within the context of a given task, and this can be done in different ways. Let us discuss the setting of the problem that motivates the need for compression.

Assume we define the machine learning task as classification for object recognition from images and consider as large model $\f(\x;\w)\mathpunct{:}\ \calX \rightarrow \calY$ a deep neural net with inputs $\x \in \calX$ (images), outputs $\y \in \calY$ (object class labels) and real-valued weights $\w \in \bbR^P$ (where the number of weights $P$ is large). In order to train a model we use a loss $L()$, e.g.\ the cross-entropy on a large training set of input-output pairs $(\x_n,\y_n)$. Also assume we have trained a large, \emph{reference} neural net $\f(\x;\overline{\w})$, i.e., $\overline{\w} = \argmin_{\w}{ L(\f(\cdot;\w)) }$, and that we are happy with its performance in terms of accuracy, but its size is too large.

We now want a smaller model $\h(\x;\bTheta)\mathpunct{:}\ \calX \rightarrow \calY$ that we can apply to the same task (classifying input images \x\ into labels \y). How should we define this smaller model? One possibility is for the small model $\h(\x;\bTheta)$ to be of the same type as the reference model \f\ but reparameterized in terms of a low-dimensional parameter vector. That is, we construct weights $\w \in \bbR^P$ from low-dimensional parameters $\bTheta \in \bbR^Q$ via some transformation $\w = \bDelta(\bTheta)$ so that the size of \bTheta\ is smaller than the size of \w, i.e., $Q < P$ (obviously, this will constrain the possible $\w \in \bbR^P$ that can be constructed). In the example before, $\h(\x;\bTheta)$ would be the same deep net but, say, with weight values $\w = \bDelta(\bTheta)$ quantized using a codebook with $K$ entries (so \bTheta\ contains the codebook and the assignment of each weight $w_i$ to a codebook entry). A second possibility is for the small model $\h(\x;\bTheta)$ to be a completely different type of model from the reference, e.g.\ a linear mapping with parameters \bTheta\ (so there is no relation between \w\ and \bTheta).

Given this, we have the following options to construct the small model $\h(\x;\bTheta)$:
\begin{itemize}
\item \emph{Direct learning}: $\min_{\bTheta}{ L(\h(\x;\bTheta)) }$: \emph{find the small model with the best loss regardless of the reference}. That is, simply train the small model to minimize the loss directly, possibly using the chain rule to compute the gradient wrt \bTheta. This approximates the reference model indirectly, in that both \f\ and \h\ are trying to solve the same task. We can have the small model \h\ be a reparameterized version of \f\ or a completely different model. Direct learning is the best thing to do sometimes but not always, as noted later.
\item \emph{Direct compression (DC)}: $\smash{\min_{\bTheta}{ \norm{\overline{\w} - \bDelta(\bTheta)}^2 }}$: \emph{find the closest approximation to the parameters of the reference model}, using a low-dimensional parameterization $\bDelta(\bTheta)$. This forces \h\ and \f\ to be models of the same type. Direct compression can be simply done with (lossless or lossy) compression of $\overline{\w}$, but it generally will not be optimal wrt the loss since the latter is ignored in learning \bTheta. We discuss this later in detail.
\item \emph{Model compression as constrained optimization}: this is our proposed approach, which we describe in section~\ref{s:formulation}. It forces \h\ and \f\ to be models of the same type, by constraining the weights \w\ to be constructed from a low-dimensional parameterization $\w = \bDelta(\bTheta)$, but \h\ must optimize the loss $L$.
\item \emph{Teacher-student}: $\smash{\min_{\bTheta}{ \int_{\calX}{ p(\x) \, \norm{\f(\x;\overline{\w}) - \h(\x;\bTheta)}^2 \, d\x } }}$: \emph{find the closest approximation \h\ to the reference function \f}, in some norm. The norm over the domain \calX\ may be approximated with a sample (e.g.\ the training set). Here, the reference model \f\ ``teaches'' the student model \h. We can have the small model \h\ be a reparameterized version of \f\ or a completely different model. 
\end{itemize}
Most existing compression approaches fall in one of these categories. In particular, traditional compression techniques have been applied using techniques most related to direct training and direct compression: using low-precision weight representations through some form of rounding (see \citealp{Gupta_15a,Hubara_16b} and references therein), even single-bit (binary) values \citep{Fiesler_90a,Courbar_15a,Rasteg_16a,Hubara_16b,Zhou_16b}, ternary values \citep{HwangSung14a,Li_16b,Zhu_17a} or powers of two \citep{Marches_93a,TangKwan93a}; quantization of weight values, soft \citep{NowlanHinton92a,Ullric_17a} or hard \citep{Fiesler_90a,Marches_93a,TangKwan93a,Gong_15a,Han_15a}; zeroing weights to achieve a sparse model \citep{HansonPratt89a,Weigen_90a,Lecun_90a,HassibStork93a,Reed93a,Yu_12b,Han_15a}; low-rank factorization of weight matrices \citep{Sainat_13a,Denil_13a,Jaderb_14a,Denton_14a,Novikov_15a}; hashing \citep{Chen_15a}; and lossless compression, such as Huffman codes \citep{Han_16a}. Some papers combine several such techniques to produce impressive results, e.g.\ \citet{Han_16a} use pruning, trained quantization and Huffman coding. Although we comment on some of these works in this paper (particularly regarding the direct compression approach), we defer detailed discussions to our companion papers on specific compression techniques (quantization in \citealp{CarreirIdelbay17a} and pruning in \citealp{CarreirIdelbay17b}, so far).

The teacher-student approach seems to have arisen in the ensemble learning literature, inspired by the desire to replace a large ensemble model (e.g.\ a collection of decision trees), which requires large storage and is slow at test time, with a smaller or faster model with similar classification accuracy \citep[section~8.5]{Zhou12a}. The smaller model can be of the same type as the ensemble members, or a different type of model altogether (e.g.\ a neural net). The basic idea is, having trained the large ensemble on a labeled training set, to use this ensemble (the ``teacher'') to label a larger, unlabeled dataset (which may be available in the task at hand, or may be generated by some form of sampling). This larger, labeled dataset is then used to train the smaller model (the ``student''). The hope is that the knowledge captured by the teacher is adequately represented in the synthetically created training set (although the teacher's mistakes will also be represented in it), and that the student can learn it well even though it is a smaller model. More generally, the approach can be applied with any teacher, not necessarily an ensemble model (e.g.\ a deep net), and the teacher's labels may be transformed to improve the student's learning, e.g.\ log-outputs \citep{BaCaruan14a} or other manipulations of the output class probabilities \citep{Hinton_15a}. However, from a compression point of view the results are unimpressive, with modest compression ratios \citep{Hinton_15a} or even failure to compress (using a single-layer net as student needs many more weights than the teacher deep net; \citealp{BaCaruan14a}). One practical problem with the teacher-student approach is that all it really does is construct the artificial training set, but it leaves the design of the student to the user, and this is a difficult model selection problem. This is unlike the compression approaches cited above, which use the teacher's model architecture but compress its parameters.

\section{A constrained optimization formulation of model compression}
\label{s:formulation}

\begin{figure}[b!]
  \vspace*{4ex}
  \centering
  \begin{tabular}{@{}c@{\hspace{0.04\linewidth}}c@{\hspace{0.04\linewidth}}c@{}}
    \psfrag{XX}[][]{~~\caja{c}{c}{$\overline{\w}$ \\ (reference)}}
    \psfrag{FF}[l][Bl]{\caja{c}{c}{$\w^*$ (optimal \\ compressed)}}
    \psfrag{Fd}[l][Bl]{\caja{c}{c}{$\bDelta(\bTheta^{\text{DC}})$ \\ (direct \\ compression)}}
    \psfrag{R}[l][l]{\w-space}
    \psfrag{mappings}[][]{\caja{c}{c}{feasible models \calC \\ (decompressible \\ by \bDelta)}}
    \includegraphics[height=0.35\linewidth,bb=210 580 435 838,clip]{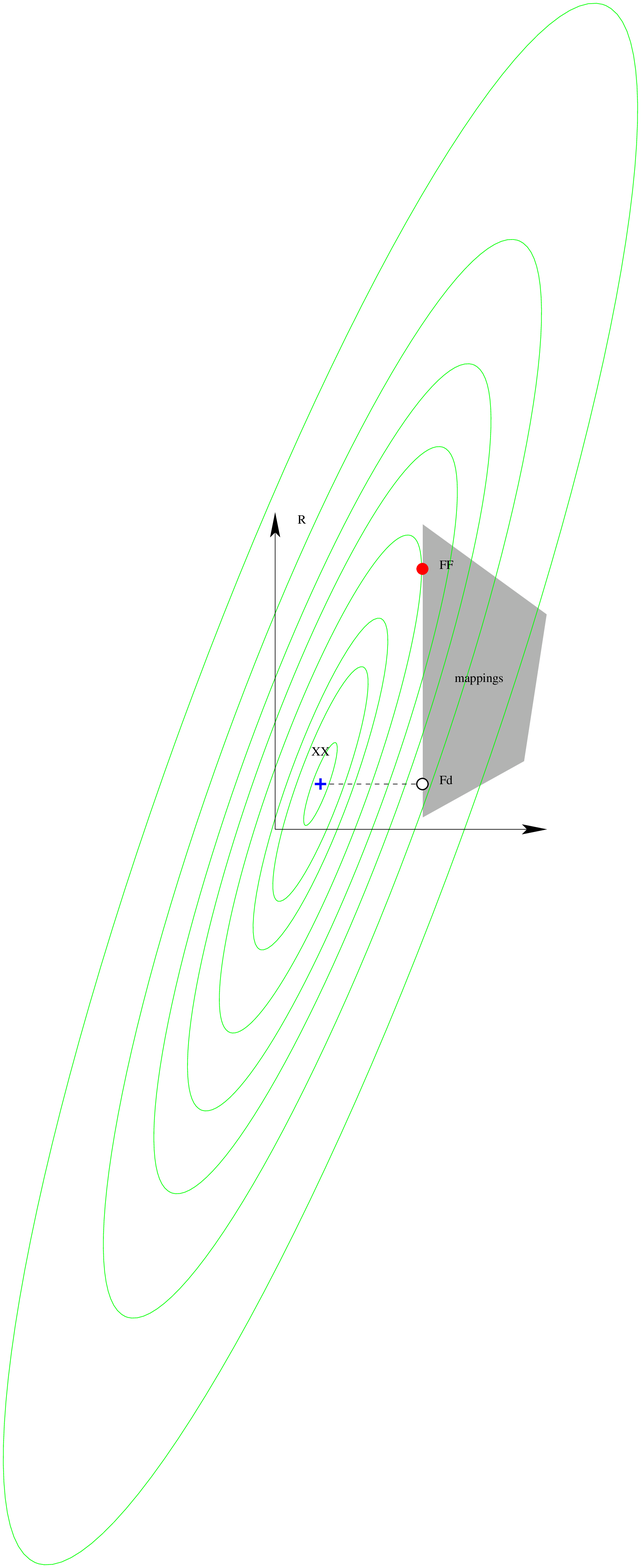} &
    \psfrag{XX1}[t][]{$\overline{\w}_1$}
    \psfrag{XX2}[t][]{$\overline{\w}_2$}
    \psfrag{FF}[l][Bl]{$\w^*$}
    \psfrag{Fd1}[l][Bl]{$\bDelta(\bTheta^{\text{DC}}_1)$}
    \psfrag{Fd2}[l][Bl]{$\bDelta(\bTheta^{\text{DC}}_2)$}
    \psfrag{R}[l][l]{\w-space}
    \psfrag{mappings}[][]{\calC}
    \includegraphics[height=0.35\linewidth,bb=15 27 240 285,clip]{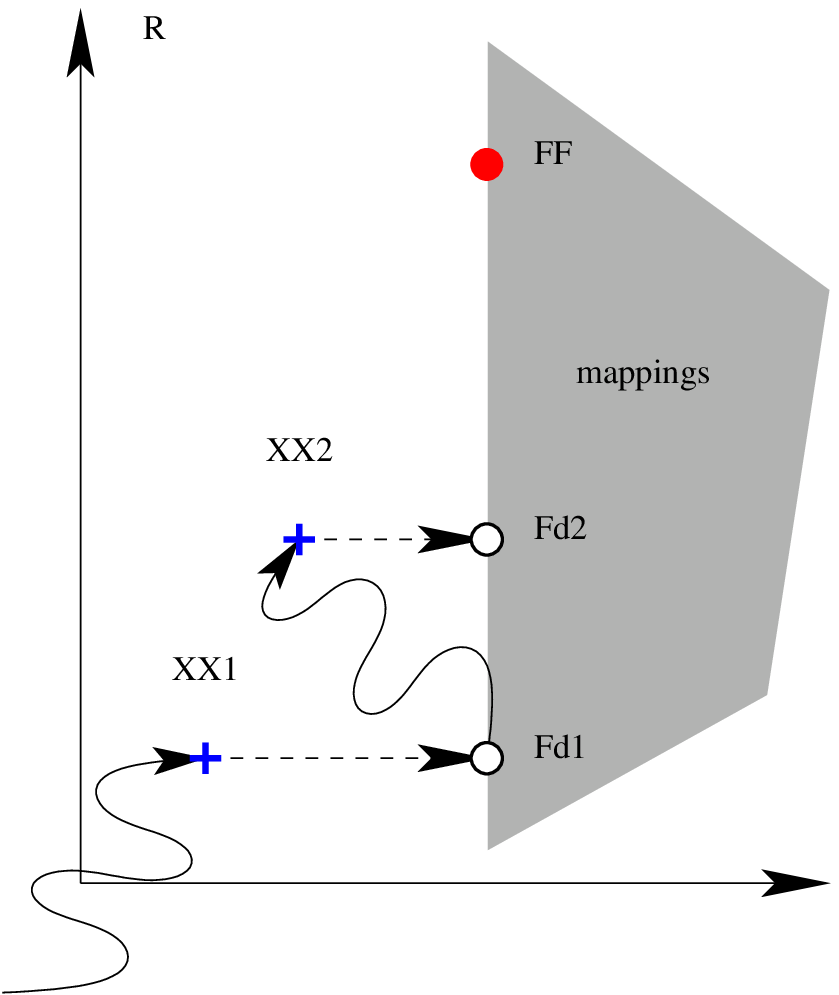} &
    \psfrag{XX}[t][]{$\overline{\w}$}
    \psfrag{FF1}[l][Bl]{$\w^*_1$}
    \psfrag{FF2}[Bl][l]{$\w^*_2$}
    \psfrag{FF3}[][l]{$\w^*_3$}
    \psfrag{FF4}[l][Bl]{$\w^*_4$}
    \psfrag{Fd1}[l][l][0.55]{$\bDelta(\bTheta^{\text{DC}}_1)$}
    \psfrag{Fd2}[l][l][0.55]{$\bDelta(\bTheta^{\text{DC}}_2)$}
    \psfrag{Fd3}[bl][l][0.55]{$\bDelta(\bTheta^{\text{DC}}_3)$}
    \psfrag{Fd4}[lt][l][0.55]{$\bDelta(\bTheta^{\text{DC}}_4)$}
    \psfrag{F1}[l][Bl]{$\calC_1$}
    \psfrag{F2}[l][Bl]{$\calC_2$}
    \psfrag{F3}[l][Bl]{$\calC_3$}
    \psfrag{F4}[l][Bl]{$\calC_4$}
    \psfrag{R}[l][l]{\w-space}
    \includegraphics[height=0.35\linewidth,bb=210 580 435 838,clip]{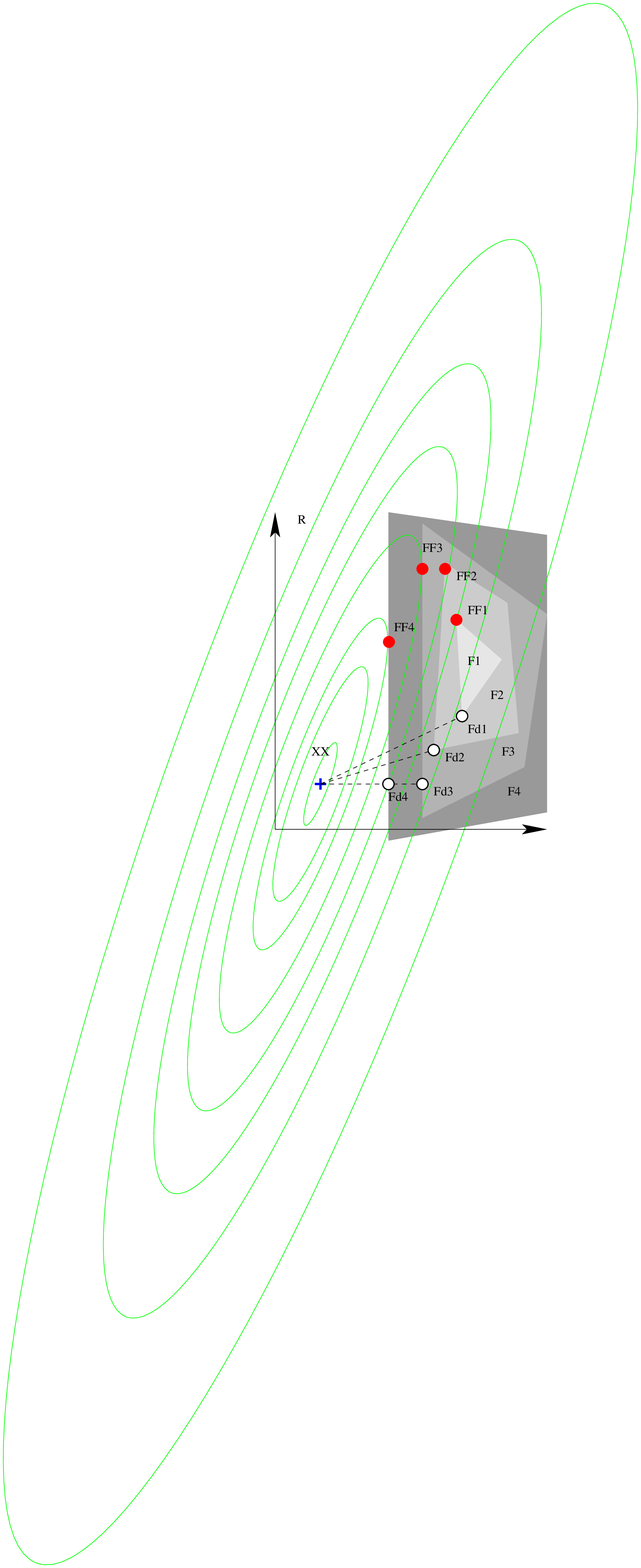}
  \end{tabular}
  \caption{Schematic representation of the idea of model compression by constrained optimization. The plots illustrate the uncompressed model space (\w-space $= \bbR^P$), the contour lines of the loss $L(\w)$ (green lines), and the set of compressed models (the feasible set $\calC = \{\w \in \bbR^P\mathpunct{:}\ \w = \bDelta(\bTheta) \text{ for } \bTheta \in \bbR^Q\}$, grayed areas), for a generic compression technique \bDelta. The \bTheta-space is not shown. $\overline{\w}$ optimizes $L(\w)$ but is infeasible (no \bTheta\ can decompress into it). The direct compression $\w^{\text{DC}} = \bDelta(\bTheta^{\text{DC}})$ is feasible but not optimal compressed (not optimal in the feasible set). $\w^* = \bDelta(\bTheta^*)$ is optimal compressed. Plot 2 shows two local optima $\overline{\w}_1$ and $\overline{\w}_2$ of the loss $L(\w)$, and their respective DC points (the contour lines are omitted to avoid clutter). Plot 3 shows several feasible sets, corresponding to different compression levels ($\calC_1$ is most compression).}
  \label{f:LC-illustration}
\end{figure}

In this work we understand compression in a mathematically specific sense that involves a learning task that we want to solve and a large model that sets the reference to meet. It is related to the direct training and direct compression concepts introduced above. Assume we have a large, \emph{reference} model $\f(\x;\overline{\w})$ with $P$ parameters (e.g.\ a neural net with inputs \x\ and weights $\w \in \bbR^P$) that has been trained on a \emph{loss} $L()$ (e.g.\ cross-entropy on a given training set) to solve a \emph{task} (e.g.\ classification). That is, $\overline{\w} = \argmin_{\w}{ L(\w) }$, where we abuse the notation to write the loss directly over the weights rather than $L(\f(\x;\w))$, and the minimizer $\overline{\w}$ may be local. We define \emph{compression} as finding a low-dimensional parameterization $\bDelta(\bTheta)$ of \w\ in terms of $Q < P$ parameters \bTheta. This will define a \emph{compressed} model $\h(\x;\bTheta) = \f(\x;\bDelta(\bTheta))$. \emph{We seek a \bTheta\ such that its corresponding model has (locally) optimal loss. We denote this``optimal compressed'' and write it as $\bTheta^*$ and $\w^* \equiv \bDelta(\bTheta^*)$ } (see fig.~\ref{f:LC-illustration}).

Ordinarily, one could then solve the problem directly over \bTheta: $\bTheta^* = \argmin_{\bTheta}{ L(\bDelta(\bTheta)) }$. This is the \emph{direct learning} option in the previous section. Instead, we equivalently write \emph{model compression as a constrained optimization} problem:
\begin{equation}
  \label{e:compression-problem}
  \textcolor{blue}{\min_{\w,\bTheta}{ L(\w) } \quad \text{s.t.} \quad \w = \bDelta(\bTheta).}
\end{equation}
The reason, which will be useful later to develop an optimization algorithm, is that we decouple the part of \emph{learning} the task, $L(\w)$ in the objective, from the part of \emph{compressing} the model, $\w = \bDelta(\bTheta)$ in the constraints.

By eliminating \w, our formulation~\eqref{e:compression-problem} is equivalent to direct learning $\min_{\bTheta}{ L(\bDelta(\bTheta)) }$, so why not do that in the first place, rather than training a large model and then compressing it? In fact, direct learning using gradient-based methods (via the chain rule) may sometimes be a good option. But it is not always convenient or possible. Firstly, if the decompression mapping \bDelta\ is not differentiable wrt \bTheta\ (as happens with quantization), then the chain rule does not apply. Second, using gradient-based methods over \bTheta\ may lead to slow convergence or be prone to local optima compared to training a large, uncompressed model (this has been empirically reported with pruning \citealp{Reed93a} and low-rank compression, e.g.\ \citealp{Denil_13a}). Third, the direct learning does not benefit from the availability of a large, well-trained model in \w-space, since it operates exclusively in the low-dimensional \bTheta-space. Finally, in direct learning the learning task aspects (loss $L$ and training set) are intimately linked to the compression ones (\bDelta\ and \bTheta), so that the design of a direct learning algorithm is specific to the combination of loss and compression technique (in our LC algorithm, both aspects separate and can be solved independently).

\subsection{Compression as orthogonal projection on the feasible set}
\label{s:compression-as-projection}

Compression and decompression are usually seen as algorithms, but here we regard them as mathematical mappings in parameter space. If
\begin{equation}
  \label{e:compression-mapping}
  \textcolor{blue}{\bPi(\w) = \argmin_{\bTheta}{\norm{\w - \bDelta(\bTheta)}^2}}
\end{equation}
is well defined, we call $\bPi\mathpunct{:}\ \w \in \bbR^P \rightarrow \bTheta \in \bbR^Q$ the \emph{compression mapping}. \bPi\ behaves as the ``inverse'' of the \emph{decompression mapping} $\bDelta\mathpunct{:}\ \bTheta \in \bbR^Q \rightarrow \w \in \bbR^P$ (although it is not a true inverse, because $\bDelta \circ \bPi \neq$ identity). Since $P > Q$, there generally will be a unique inverse \bTheta\ for any given \w\ (but not necessarily). Compressing \w\ may need an algorithm (e.g.\ SVD for low-rank compression, $k$-means for quantization) or a simple formula (e.g.\ taking the sign or rounding a real value). \bPi\ will usually satisfy $\bPi(\bDelta(\bTheta)) = \bTheta$ for any $\bTheta \in \bbR^Q$, i.e., decompressing \bTheta\ then compressing it gives back \bTheta. The decompression mapping \bDelta\ appears explicitly in our problem definition~\eqref{e:compression-problem}, while the compression mapping \bPi\ appears in our LC algorithm (in the C step), as we will see.

With lossless compression, \bDelta\ is bijective and $\bPi = \bDelta^{-1}$. With lossy compression, \bDelta\ need be neither surjective (since $Q < P$) nor injective (since it can have symmetries, e.g.\ reordering singular values in the SVD or centroids in $k$-means). Also, \bDelta\ need not be differentiable wrt \bTheta\ (for low-rank compression it is, for quantization it is not).

The feasible set $\calC = \{\w \in \bbR^P\mathpunct{:}\ \w = \bDelta(\bTheta) \text{ for } \bTheta \in \bbR^Q\}$ contains all high-dimensional models \w\ that can be obtained by decompressing some low-dimensional model \bTheta. In our framework, compression is equivalent to orthogonal projection on the feasible set. Indeed, $\bPi(\w) = \smash{\argmin_{\bTheta}{ \norm{\w - \bDelta(\bTheta)}^2 }}$ is equivalent to $\bPi(\w) = \smash{\argmin_{\bTheta,\w'}{ \norm{\w - \w'}^2 }}$ s.t.\ $\w' = \bDelta(\bTheta)$, which is the problem of finding the closest feasible point to \w\ in Euclidean distance, i.e., $\bPi(\w)$ is the orthogonal projection of \w\ on the feasible set.

\subsection{Types of compression}
\label{s:compression-types}

Our framework includes many well-known types of compression:
\begin{description}
\item[Low-rank compression] defines $\bDelta(\U,\V) = \smash{\U\V^T}$, where we write the weights in matrix form with \W\ of $m \times n$, \U\ of $m \times r$ and \V\ of $n \times r$, and with $r < \min(m,n)$. If we learn both \U\ and \V, the compression mapping is given by the singular value decomposition (SVD) of \W. We can also use a fixed dictionary or basis (e.g.\ given by wavelets or the discrete cosine transform) and learn either \U\ or \V\ only. The compression mapping is then given by solving a linear system. We study this in a paper in preparation.
\item[Quantization] uses a discrete mapping \bDelta\ given by assigning each weight to one of $K$ codebook values. If we learn both the assignments and the codebook, compression can be done by $k$-means. We can also use a fixed codebook, such as $\{-1,+1\}$ or $\{-1,0,+1\}$. The compression mapping is then given by a form of rounding. We study this in a separate paper \citep{CarreirIdelbay17a}.
\item[Low-precision approximation] defines a constraint $w_i = \theta_i$ per weight where $w_i$ is real (or, say, a double-precision float) and $\theta_i$ is, say, a single-precision float. The compression mapping sets $\theta_i$ to the truncation of $w_i$. A particular case is binarization, where $\theta_i \in \{-1,+1\}$ and the compression mapping sets $\theta_i = \sgn{w_i}$. This can be seen as quantization using a fixed codebook.
\item[Pruning] defines $\w = \bDelta(\btheta) = \btheta$ where \w\ is real and \btheta\ is constrained to have few nonzero values. The compression mapping involves some kind of thresholding. We study this in a separate paper \citep{CarreirIdelbay17b}.
\item[Lossless compression] takes many forms, such as Huffman coding, arithmetic coding or run-length encoding \citep{GershoGray92a}, and is special in that \bDelta\ is a bijection. Hence, the direct compression solves the problem with no need for our LC algorithm. However, lossless compression affords limited compression power.
\end{description}
It is also possible to combine several compression techniques.

For any lossy compression technique, the user can choose a \emph{compression level} (e.g.\ the rank in low-rank approaches or the codebook size in quantization approaches). Obviously, we are interested in the highest compression level that retains acceptable accuracy in the task. Note that in accounting for the size of the compressed model, we need to add two costs: storing the weights of the compressed model, and storing the decompression algorithm data (the dictionary, the codebook, the location of the nonzero values, etc.).

We can be flexible in the definition of the constraints in problem~\eqref{e:compression-problem}. We need not compress all the weights (e.g.\ we usually do not compress the biases in a neural net); this simply means we have no constraint for each such weight $w_i$. We can use different types of compression for different parts of the model (e.g.\ for different layers of the net); this means we have sets of constraints $\w_j = \bDelta_j(\bTheta_j)$, which separate and can be ran in parallel in the C step (see later). Also, there may be additional constraints in problem~\eqref{e:compression-problem}. For example, in quantization we have binary assignment vectors whose sum must equal one, or variables that must belong to a set such as $\{-1,+1\}$. The original minimization over the loss $L(\w)$ may also be constrained, e.g.\ if the weights should be nonnegative.

The decompression mapping $\w = \bDelta(\bTheta)$ can take different forms. Each $w_i$ may be a function of the entire \bTheta, which are then ``shared'' compression parameters. This happens with low-rank compression: $\W = \bDelta(\U,\V) = \U \V^T$. Instead, each $w_i$ may have ``shared'' parameters and ``private'' parameters $\vartheta_i$. This happens in quantization: $w_i = c_{\vartheta_i}$, where $\{c_1,\dots,c_K\}$ is a codebook shared by all weights and $\vartheta_i \in \{1,\dots,K\}$ is the index in the codebook that $w_i$ is assigned to%
\footnote{Actually, it is more convenient to express $\vartheta_i$ as a binary assignment vector $\z_i \in \{0,1\}^K$ where $\sum^K_{k=1}{ z_{ik} } = 1$, see \citet{CarreirIdelbay17a}.}.
Here, $\bTheta = \{c_1,\dots,c_K\} \cup \{\vartheta_i\}^P_{i=1}$.

Earlier we defined the feasible set $\calC = \{\w \in \bbR^P\mathpunct{:}\ \w = \bDelta(\bTheta) \text{ for } \bTheta \in \bbR^Q\}$, which contains all high-dimensional models \w\ that can be obtained by decompressing some low-dimensional model \bTheta. Good compression schemes should satisfy two desiderata:
\begin{enumerate}
\item Achieve low compression error. Obviously, this depends on the compression level (which determines the ``size'' of the feasible set) and on the optimization, e.g.\ our LC algorithm (which adapts the parameters \bTheta\ to the task at hand as best as possible). But the form of the compression mapping (low-rank, quantization, etc.\@) matters. This form should be such that every uncompressed model \w\ of interest is near some part of the feasible set, i.e., the decompression mapping $\bDelta(\bTheta)$ is ``space-filling'' to some extent.
\item Have simple compression and decompression algorithms: fast and ideally without local optima.
\end{enumerate}

\subsection{Other formulations of model compression}
\label{s:other}

\paragraph{A penalty formulation}

One can define the compression problem as
\begin{equation}
  \label{e:compression-penalty}
  \min_{\w}{ L(\w) + \lambda \, C(\w) }
\end{equation}
where the penalty or cost function $C(\w)$ encourages \w\ to be close to a compressed model and $\lambda \ge 0$ is a user parameter. Computationally, this can also be conveniently optimized by duplicating \w:
\begin{equation}
  \label{e:compression-penalty2}
  \min_{\w,\bTheta}{ L(\w) + \lambda \, C(\bTheta) } \quad \text{s.t.} \quad \w = \bTheta
\end{equation}
and applying a penalty method and alternating optimization, so the learning part on $L$ and \w\ separates from the compression part on \bTheta. However, this formulation is generally less preferable than the constrained formulation~\eqref{e:compression-problem}. The reason is that the penalty $\lambda \, C(\w)$ does not guarantee that the optimum of~\eqref{e:compression-penalty} is exactly a compressed model, only that it is close to some compressed model, and a final, suboptimal compression step is required. For example, penalizing the deviations of the weights $w_i$ from a given codebook (say, $\{-1,+1\}$) will encourage the weights to cluster around codebook values, but not actually to equal them, so upon termination we must round all weights, which makes the result suboptimal.

For some types of compression a penalty formulation does produce exactly compressed models. In pruning \citep{CarreirIdelbay17b}, we want the weight vector \w\ to contain many zeros (be sparse). Using a penalty $\lambda \, C(\w)$ where $C$ is a sparsity-inducing norm, say $C(\w) = \norm{\w}_0$, will result in a sparse weight vector. Still, in the penalty form the number of nonzeros in \w\ is implicitly related to the value of $\lambda$, while the constraint form $\norm{\w}_0 \le \kappa$ allows us to set the number of nonzeros directly, which is more convenient in practice.

\paragraph{Another constrained formulation}

It is conceivable to consider the following, alternative formulation of model compression as a constrained optimization:
\begin{equation}
  \label{e:compression-problem2}
  \min_{\w,\bTheta}{ L(\w) } \quad \text{s.t.} \quad \bPi(\w) = \bTheta
\end{equation}
directly in terms of a well-defined compression mapping \bPi, rather than in terms of a decompression mapping \bDelta\ as in~\eqref{e:compression-problem}. This has the advantage that problem~\eqref{e:compression-problem2} is simply solved by setting $\w^* = \overline{\w} \equiv \argmin_{\w}{ L(\w) }$ (the reference model) and $\bTheta^* = \bPi(\overline{\w})$, without the need for an iterative algorithm over \w\ and \bTheta\ (we call this ``direct compression'' later). Indeed, the constraint in~\eqref{e:compression-problem2} does not actually constrain \w. However, this formulation is rarely useful, because the resulting compressed model may have an arbitrarily large loss $L(\bPi(\overline{\w}))$. An exception is lossless compression, which satisfies $\bPi = \bDelta^{-1}$, and here the optimal compressed solution can indeed be achieved by compressing the reference model directly.

\section{A ``Learning-Compression'' (LC) algorithm}
\label{s:LC}

Although the constrained problem~\eqref{e:compression-problem} can be solved with a number of nonconvex optimization algorithms, it is key to profit from parameter separability, which we achieve with penalty methods and alternating optimization, as described next.

\paragraph{Handling the constraints via penalty methods}

Two classical penalty methods are the quadratic penalty (QP) and the augmented Lagrangian (AL) \citep{NocedalWright06a}. In the QP, we optimize the following over the parameters $(\w,\bTheta)$ while driving $\mu\rightarrow\infty$:
\begin{equation}
  \label{e:QP}
  Q(\w,\bTheta;\mu) = L(\w) + \frac{\mu}{2} \norm{\w - \bDelta(\bTheta)}^2.
\end{equation}
This has the effect of gradually enforcing the constraints, and the parameters trace a path $(\w(\mu),\bTheta(\mu))$ for $\mu > 0$. A better method is the AL. Here we optimize the following over $(\w,\bTheta)$ while driving $\mu\rightarrow\infty$:
\begin{align}
  \label{e:augLag}
  \calL_A(\w,\bTheta,\blambda;\mu) &= L(\w) - \blambda^T (\w - \bDelta(\bTheta)) + \frac{\mu}{2} \norm{\w - \bDelta(\bTheta)}^2 \\
  \label{e:augLag2}
  &= L(\w) + \frac{\mu}{2} \norm{\w - \bDelta(\bTheta) - \frac{1}{\mu} \blambda}^2 - \frac{1}{2\mu} \norm{\blambda}^2
\end{align}
and we update the Lagrange multiplier estimates $\blambda \leftarrow \blambda - \mu (\w - \bDelta(\bTheta))$ after optimizing $\calL_A$ over $(\w,\bTheta)$ for each $\mu$. Optimizing $\calL_A$ over $(\w,\bTheta)$ for fixed \blambda\ is like optimizing the QP with a shifted parameterization $\bDelta(\bTheta) + \frac{1}{\mu} \blambda$, as eq.~\eqref{e:augLag2} shows explicitly. The AL is equivalent to the QP if $\blambda = \0$.

\paragraph{Optimizing the penalized function with alternating optimization}

Applying alternating optimization to the penalized function (quadratic-penalty function $Q$ or augmented Lagrangian $\calL_A$) over \w\ and \bTheta\ (for fixed \blambda) gives our ``learning-compression'' (LC) algorithm. The steps are as follows:
\begin{itemize}
\item \textbf{L (learning) step:}
  \begin{equation}
    \label{e:Lstep}
    \textcolor{blue}{\min_{\w}{ L(\w) + \frac{\mu}{2} \norm{\w - \bDelta(\bTheta) - \frac{1}{\mu} \blambda}^2 }.}
  \end{equation}
  This is a regular training of the uncompressed model but with a quadratic regularization term. \emph{This step is independent of the compression type.}
\item \textbf{C (compression) step:}
  \begin{equation}
    \label{e:Cstep}
    \textcolor{blue}{\min_{\bTheta}{ \norm{\w - \frac{1}{\mu} \blambda - \bDelta(\bTheta)}^2 } \quad \Longleftrightarrow \quad \bTheta = \bPi \left(\w - \frac{1}{\mu} \blambda \right)}.
  \end{equation}
  This means finding the best (lossy) compression of $\w - \smash{\frac{1}{\mu}} \blambda$ (the current uncompressed model, shifted by $\smash{\frac{1}{\mu}} \blambda$) in the $\ell_2$ sense (orthogonal projection on the feasible set), and corresponds to our definition of the compression mapping \bPi\ in section~\ref{s:compression-as-projection}. \emph{This step is independent of the loss, training set and task.}
\end{itemize}

\subsection{Practicalities}

Fig.~\ref{f:pseudocode} gives the LC algorithm pseudocode for the augmented Lagrangian (using a single LC iteration per \blambda\ update). For the quadratic-penalty version, ignore or set to zero \blambda\ everywhere.

\paragraph{Reusing existing code in the L and C steps}

The L step gradually pulls \w\ towards a model that can be obtained by decompressing some $\bTheta \in \bbR^Q$, and the C step compresses the current \w. Both of these steps can be done by reusing existing code rather than writing a new algorithm, which makes the LC algorithm easy to implement. The L step just needs an additive term ``$\mu (\w - \bDelta(\bTheta) - \smash{\frac{1}{\mu}} \blambda)$'' to the gradient of $L(\w)$, e.g.\ in stochastic gradient descent (SGD) for neural nets. The C step depends on the compression type, but will generally correspond to a well-known compression algorithm (e.g.\ SVD for low-rank compression, $k$-means for quantization). Different types of compression can be used by simply calling a different compression routine in the C step, with no other change to the LC algorithm. This facilitates the model designer's job of trying different types of compression to find the most suitable for the task at hand.

\begin{figure}[b]
  \centering
  \setlength{\fboxsep}{1ex}
  \framebox{%
    \begin{minipage}[c]{0.80\linewidth}
      \begin{tabbing}
        n \= n \= n \= n \= n \= \kill
        \underline{\textbf{input}} training data and model with parameters (weights) \w \\
        $\w \leftarrow \overline{\w} = \argmin_{\w}{ L(\w) }$ \` {\small\textsf{reference model}} \\
        $\bTheta \leftarrow \bTheta^{\text{DC}} = \bPi(\overline{\w}) = \argmin_{\bTheta}{ \norm{\overline{\w} - \bDelta(\bTheta)}^2 }$ \` {\small\textsf{compress reference model}} \\
        $\blambda \leftarrow \0$ \\
        \underline{\textbf{for}} $\mu = \mu_0 < \mu_1 < \dots < \infty$ \+ \\
        $\w \leftarrow \argmin_{\w}{ L(\w) + \smash{\frac{\mu}{2} \norm{\smash{\w - \bDelta(\bTheta) - \frac{1}{\mu} \blambda}}^2} }$ \` {\small\textsf{L step: learn model}} \\
        $\bTheta \leftarrow \bPi(\w - \smash{\frac{1}{\mu} \blambda}) = \argmin_{\bTheta}{ \norm{\w - \smash{\frac{1}{\mu} \blambda} - \bDelta(\bTheta)}^2 }$ \` {\small\textsf{C step: compress model}} \\
        $\blambda \leftarrow \blambda - \mu (\w - \bDelta(\bTheta))$ \` {\small\textsf{Lagrange multipliers}} \\
        \textbf{if} $\norm{\w - \bDelta(\bTheta)}$ is small enough \textbf{then} exit the loop \- \\
        \underline{\textbf{return}} \w, \bTheta
      \end{tabbing}
    \end{minipage}
  }
  \caption{Pseudocode for the LC algorithm, augmented-Lagrangian version.}
  \label{f:pseudocode}
\end{figure}

\paragraph{Runtime}

The runtime of the C step is typically negligible compared to that of the L step (which involves the actual training set, usually much larger than the number of parameters), although this depends on the type of compression and the loss. Hence, it pays to do the C step as exactly as possible. The overall runtime of the LC algorithm will be dominated by the L steps, as if we were training an uncompressed model for a longer time.

\paragraph{Schedule of the penalty parameter $\mu_k$}

In practice, as usual with penalty methods, we use a multiplicative $\mu$ schedule: $\mu_k = a^k \mu_0$ with $a$ slightly larger than 1 and $\mu_0 \approx 0$ (set by trial and error); see also section~\ref{s:conv}. As noted in the pseudocode, we run a single L and C step per $\mu$ value because this keeps the algorithm simple (we avoid an extra loop). However, in some cases it may be advantageous to run multiple L and C steps per $\mu$ value, e.g.\ if it is possible to cache matrix factorizations in order to speed up the L or C step.

\paragraph{Initialization and termination}

We always initialize $\blambda = \0$ and $(\w,\bTheta) = (\overline{\w},\bTheta^{\text{DC}})$, i.e., to the reference model and direct compression, which is the exact solution for $\mu \rightarrow 0^+$, as we show in the next section. We stop the LC algorithm when $\norm{\w - \bDelta(\bTheta)}$ is smaller than a set tolerance, which will happen when $\mu$ is large enough. At this point, the final iterate $(\w,\bTheta)$ satisfies $\bTheta = \bPi(\w - \smash{\frac{1}{\mu} \blambda}) = \argmin_{\bTheta}{ \norm{\w - \smash{\frac{1}{\mu} \blambda} - \bDelta(\bTheta)}^2 }$, so that $\bDelta(\bTheta)$ is a compressed model (hence a feasible weight vector), while \w\ is not (although it will be very close to $\bDelta(\bTheta)$). Hence, the solution (feasible and (near-)optimal) is $\bDelta(\bTheta)$.

\paragraph{Other comments}

In the derivation of the LC algorithm we used a quadratic penalty to penalize violations of the equality constraint $\w = \bDelta(\bTheta)$. This is convenient because it makes the L step easy with gradient-based optimization (it behaves like a form of weight decay on the loss), and the C step is also easy for some compression forms (quantization can be done by $k$-means, low-rank approximation by the SVD). However, non-quadratic penalty forms $P(\w,\bDelta(\bTheta))$ may be convenient in other situations.

Note that the C step can also be seen as trying to predict the weights \w\ from the low-dimensional parameters \bTheta\ via the mapping \bDelta. In this sense, compression is a machine learning problem of modeling ``data'' (the $P$ weights) using a low-dimensional space (the $Q$ parameters). This was noted by \citet{Denil_13a} in the context of low-rank models, but it applies generally in our framework. See also our discussion of parametric embeddings in section~\ref{s:MAC-PE}. In fact, model fitting itself in machine learning can be seen as compressing the training set into the model parameters.

\subsection{Direct compression (DC) and the beginning of the path}
\label{s:LC:DC}

In the LC algorithm, the parameters trace a path $(\w(\mu),\bTheta(\mu))$ for $\mu \ge 0$, and the solution is obtained for $\mu \rightarrow \infty$, when the constraints are satisfied and we achieve an optimal compressed model. The beginning of the path, for $\mu \rightarrow 0^+$, has a special meaning: it corresponds to the \emph{direct compression} (training the reference model and then compressing its weights), as we show next. (We write $\mu \rightarrow 0^+$ rather than $\mu = 0$ because the latter defines a problem without \bTheta.)

Taking $\mu \rightarrow 0^+$ in~\eqref{e:QP} or~\eqref{e:augLag} and assuming an initial $\blambda=\0$, we obtain
\begin{equation}
  \label{e:w0}
  \w(0^+) = \argmin_{\w}{ L(\w) } \equiv \overline{\w},
\end{equation}
since the $\mu$ term is negligible, and
\begin{equation}
  \label{e:theta0}
  \bTheta(0^+) = \bPi(\overline{\w}) = \argmin_{\bTheta}{ \norm{\overline{\w} - \bDelta(\bTheta)}^2 } \equiv \bTheta^{\text{DC}},
\end{equation}
i.e., the orthogonal projection of $\overline{\w}$ on the feasible set (up to local optima in both \w\ and \bTheta), recalling the discussion of section~\ref{s:compression-as-projection}. Hence, the path starts at $(\w(0^+),\bTheta(0^+)) = (\overline{\w},\bTheta^{\text{DC}})$, which corresponds to the \emph{direct compression}: training the large, reference model and then compressing its weights (note that in DC we discard $\overline{\w}$ and keep only $\w^{\text{DC}} \equiv \bDelta(\bTheta^{\text{DC}})$, i.e., the compressed model). This is not optimal in the sense of problem~\eqref{e:compression-problem} because the compression ignores the learning task; the best compression of the weights need not be the best compressed model for the task.

The constrained optimization view shows that, if an optimal uncompressed model $\overline{\w}$ is feasible, i.e., there is a $\bTheta \in \bbR^Q$ with $\bDelta(\bTheta) = \overline{\w}$, then it is optimal compressed, since the compression has zero error, and in this case $\overline{\w} = \w^{\text{DC}} = \w^*$ (and there is no need to optimize with the LC algorithm). But, generally, compression will increase the loss, the more so the larger the compression level (so the smaller the feasible set and the larger the distance $\norm{\smash{\overline{\w} - \bDelta(\bTheta^{\text{DC}})}}$ to the DC model). Therefore, we should expect that, with low compression levels, direct compression will be near-optimal, but as we compress more---which is our goal, and critical in actual deployment in mobile devices---it will become worse and worse in loss wrt the optimal compressed model $\w^*$. Hence, high compression rates require the better LC optimization. Plot 3 in figure~\ref{f:LC-illustration} illustrates this. Indeed, the suboptimality of direct compression compared to the result of the LC algorithm becomes practically evident in experiments compressing neural nets as we push the compression level \citep{CarreirIdelbay17a,CarreirIdelbay17b}. In section~\ref{s:DC}, we discuss existing work related to direct compression.

\section{Convergence results for the LC algorithm}
\label{s:conv}

The quadratic penalty and augmented Lagrangian methods belong to the family of homotopy (path-following) algorithms, where the minima $(\w(\mu),\bTheta(\mu))$ of $Q(\w,\bTheta;\mu)$ or $\calL_A(\w,\bTheta,\blambda;\mu)$ define a path for $\mu \ge 0$ and the solution we want is at $\mu \rightarrow \infty$. We give a theorem based on the QP; similar results are possible for the AL. Assume the loss $L(\w)$ and the decompression mapping $\bDelta(\bTheta)$ are continuously differentiable wrt their arguments, and that the loss is lower bounded.
\begin{thm}
  \label{th:LC}
  Consider the constrained problem~\eqref{e:compression-problem} and its quadratic-penalty function $Q(\w,\bTheta;\mu)$ of~\eqref{e:QP}. Given a positive increasing sequence $(\mu_k) \rightarrow \infty$, a nonnegative sequence $(\tau_k) \rightarrow 0$, and a starting point $(\w^0,\bTheta^0)$, suppose the QP method finds an approximate minimizer $(\w^k,\bTheta^k)$ of $Q(\w^k,\bTheta^k;\mu_k)$ that satisfies $\norm{\smash{\nabla_{\w,\bTheta} \, {Q(\w^k,\bTheta^k;\mu_k)}}} \le \tau_k$ for $k=1,2,\dots$ Then, $\lim_{k\rightarrow\infty}{ \big( (\w^k,\bTheta^k) \big) } = (\w^*,\bTheta^*)$, which is a KKT point for the problem~\eqref{e:compression-problem}, and its Lagrange multiplier vector has elements $\blambda^*_i = \smash{\lim_{k\rightarrow\infty}{ \big( -\mu_k \, (\w^k_i - \bDelta(\bTheta^k_i)) \big) }}$, $i=1,\dots,P$.
\end{thm}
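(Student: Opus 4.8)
This theorem is a direct instance of the classical convergence result for the quadratic-penalty method applied to an equality-constrained problem (see, e.g., Nocedal and Wright, Theorem~17.2), specialized to the constraint $c(\w,\bTheta) \bydef \w - \bDelta(\bTheta) = \0$. The plan is to invoke that machinery after checking the hypotheses it requires, and to spell out the form the conclusion takes in our notation. First I would set up the stationarity condition: since $\nabla_{\w,\bTheta} Q(\w^k,\bTheta^k;\mu_k) = \big(\nabla L(\w^k) + \mu_k(\w^k - \bDelta(\bTheta^k)),\ -\mu_k \, \nabla\bDelta(\bTheta^k)^T(\w^k - \bDelta(\bTheta^k))\big)$, the assumption $\norm{\nabla_{\w,\bTheta} Q(\w^k,\bTheta^k;\mu_k)} \le \tau_k$ gives, componentwise,
\begin{align}
  \label{e:pf-stat}
  \norm{\nabla L(\w^k) + \mu_k(\w^k - \bDelta(\bTheta^k))} &\le \tau_k, \\
  \label{e:pf-stat2}
  \norm{\nabla\bDelta(\bTheta^k)^T \big(\mu_k(\w^k - \bDelta(\bTheta^k))\big)} &\le \tau_k.
\end{align}
Defining the candidate multiplier estimate $\blambda^k \bydef -\mu_k(\w^k - \bDelta(\bTheta^k))$, relation~\eqref{e:pf-stat} reads $\norm{\nabla L(\w^k) - \blambda^k} \le \tau_k$ and~\eqref{e:pf-stat2} reads $\norm{\nabla\bDelta(\bTheta^k)^T\blambda^k} \le \tau_k$; these are precisely the approximate KKT stationarity conditions for~\eqref{e:compression-problem}.

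Next I would establish that the constraint is driven to zero along any convergent subsequence. The argument is the standard one: evaluate $Q$ at the limit candidate versus at $(\w^k,\bTheta^k)$, using that $(\w^k,\bTheta^k)$ approximately minimizes $Q(\cdot;\mu_k)$, to obtain a bound of the form $\tfrac{\mu_k}{2}\norm{\w^k - \bDelta(\bTheta^k)}^2 \le L(\w^*) - L(\w^k) + O(\tau_k)$ for a feasible comparison point; since $L$ is lower bounded and $\mu_k \to \infty$, this forces $\norm{\w^k - \bDelta(\bTheta^k)} \to 0$, so the limit point $(\w^*,\bTheta^*)$ is feasible, i.e.\ $\w^* = \bDelta(\bTheta^*)$. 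Then from~\eqref{e:pf-stat} and continuity of $\nabla L$, the multiplier estimates converge: $\blambda^k \to \blambda^*$ with $\blambda^* = \nabla L(\w^*)$, which is exactly the claimed formula $\blambda^*_i = \lim_k\big(-\mu_k(\w^k_i - \bDelta(\bTheta^k_i))\big)$. Passing to the limit in~\eqref{e:pf-stat2} and using continuity of $\nabla\bDelta$ gives $\nabla\bDelta(\bTheta^*)^T\blambda^* = \0$. Together with feasibility, this is the full KKT system for~\eqref{e:compression-problem}, so $(\w^*,\bTheta^*)$ is a KKT point with the stated multiplier.

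One point that deserves care, and which I expect to be the main technical obstacle, is that the theorem as stated asserts convergence of the \emph{whole} sequence to a single limit $(\w^*,\bTheta^*)$, whereas the generic penalty-method theorem only yields that \emph{every limit point} of a bounded subsequence is a KKT point. To get the stronger statement one needs either an a priori boundedness/compactness argument for $(\w^k,\bTheta^k)$ (which does not follow merely from $L$ lower bounded — e.g.\ level sets of $L$ need not be bounded) or an assumption isolating the limit. In practice I would either (i) add the standing hypothesis that the iterates remain in a bounded set and that the relevant limit point is unique (or is a strict local minimizer), which is the usual reading of "under standard assumptions," or (ii) restate the conclusion in the subsequential form, matching Nocedal–Wright exactly. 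I would also note the one genuine caveat that is standard for penalty methods: at the limit point the constraint Jacobian with respect to $(\w,\bTheta)$, namely $[\,\I \;\; -\nabla\bDelta(\bTheta^*)\,]$, has full row rank $P$ automatically (because of the identity block), so the usual LICQ requirement needed to guarantee $\{\blambda^k\}$ stays bounded is satisfied for free — this is worth remarking, since it is exactly the structural feature that makes the split formulation~\eqref{e:compression-problem} well behaved, and it removes the boundedness-of-multipliers worry that complicates the general theorem.
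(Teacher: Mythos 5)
Your proposal takes essentially the same route as the paper's proof: both reduce the statement to Nocedal--Wright's Theorem~17.2 for the quadratic-penalty method, and both observe that the constraint qualification holds automatically because the Jacobian of $\w - \bDelta(\bTheta)$ with respect to $(\w,\bTheta)$ contains a $P\times P$ identity block and hence has full row rank $P$. Your unpacking of the componentwise stationarity conditions and of the feasibility argument is a correct expansion of the proof of the cited theorem rather than a different method. The one place where you go beyond the paper is your caveat about whole-sequence convergence, and it is well taken: the paper justifies the existence of $\lim_{k\rightarrow\infty}(\w^k,\bTheta^k)$ only by asserting that $L$, and hence $Q$, is lower bounded with continuous derivatives, which---exactly as you point out---does not by itself imply that the iterates are bounded, let alone that they converge to a single limit; the quoted theorem only addresses limit points of subsequences. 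So your suggestion to either add a boundedness/compactness hypothesis (or uniqueness of the limit point) or to restate the conclusion subsequentially identifies a genuine looseness in the paper's own argument, not a defect in yours.
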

\begin{proof}
  It follows by applying theorem~17.2 in \citet{NocedalWright06a}, quoted in appendix~\ref{s:QP-conv}, to the constrained problem~\eqref{e:compression-problem} and by noting that 1) $\lim_{k\rightarrow\infty}{ \big( (\w^k,\bTheta^k) \big) } = (\w^*,\bTheta^*)$ exists and 2) that the constraint gradients are linearly independent. We prove these two statements in turn. First, the limit of the sequence $((\w^k,\bTheta^k))$ exists because the loss and hence the QP function $Q(\w,\bTheta;\mu)$ are lower bounded and have continuous derivatives. Second, the constraint gradients are linearly independent at any point $(\w,\bTheta)$ and thus, in particular, at the limit $(\w^*,\bTheta^*)$. To see this, note that the Jacobian of the constraint function $\w - \bDelta(\bTheta) = \0$ wrt $(\w,\bTheta)$ is the $P \times (P+Q)$ matrix $\left( \I_P\ \ \nabla_{\bTheta}{ \bDelta(\bTheta) } \right)$, whose rank is obviously $P$, and so is full-rank.
\end{proof}
Stated otherwise, the LC algorithm defines a continuous path $(\w(\mu),\bTheta(\mu))$ which, under some mild assumptions (essentially, that we minimize $Q(\w,\bTheta;\mu)$ increasingly accurately as $\mu\rightarrow\infty$), converges to a stationary point (typically a minimizer) of the constrained problem~\eqref{e:compression-problem}. With convex problems, there is a unique path leading to the solution. With nonconvex problems, there are multiple paths, each leading to a local optimum. As with any nonconvex continuous optimization problem, convergence may occur in pathological cases to a stationary point of the constrained problem that is not a minimizer, but such cases should be rare in practice.

Computationally, it is better to approach the solution by following the path from small $\mu$, because $Q$ (or $\calL_A$) become progressively ill-conditioned as $\mu \rightarrow \infty$. While ideally we would follow the path closely, by increasing $\mu$ slowly from $0$ to $\infty$, in practice we follow the path loosely to reduce the runtime, typically using a multiplicative schedule for the penalty parameter, $\mu_k = \mu_0 a^k$ for $k = 0,1,2\dots$ where $\mu_0>0$ and $a>1$. If, after the first iteration, the iterates get stuck at the direct compression value, this is usually a sign that we increased $\mu$ too fast. The smaller $\mu_0$ and $a$, the more closely we follow the path.

This theorem applies to a number of common losses used in machine learning, and to various compression techniques, such as low-rank factorization, which are continuously differentiable. However, it does not apply to some popular compression forms, specifically quantization and pruning, which generally give rise to NP-complete problems. Indeed, consider one of the simplest models: least-squares linear regression, which defines a quadratic loss over the weights, whose solution is given by a linear system. Forcing the weights to be either $-1$ or $+1$ (quantization by binarization) defines a binary quadratic problem over the weights, which is NP-complete \citep{GareyJohnson79a}. Forcing a given proportion of the weights to be zero (pruning) is an $\ell_0$-constrained problem, also NP-complete \citep{Nataraj95a}. While we cannot expect the LC algorithm to find the global optimum of these problems, we can expect reasonably good results in the following sense. 1) The LC algorithm is still guaranteed to converge to a weight vector \w\ that satisfies the constraints (having elements in $\{-1,+1\}$ or having the given proportion of elements be zero, in those examples), hence it will always converge to a validly compressed model. 2) Because the L step minimizes (partially) the loss, the convergence will likely be to a low-loss point (even if not necessarily optimal).

\subsection{Choice of learning rate in the L step with large-scale optimization\protect\footnote{In this section, we use the subindex $t$ to indicate iterates, such as $\w_t$, \emph{within the L step}. These are different from the iterates \emph{of the LC algorithm}, denoted as $(\w^k,\bTheta^k)$ in theorem~\ref{th:LC}.}}
\label{s:SGD-clip}

Theorem~\ref{th:LC} states that for convergence to occur, the L and C steps must be solved increasingly accurately. This is generally not a problem for the C step, as it is usually solved by an existing compression algorithm. The L step needs some consideration. The objective function over \w\ in the L step has the form of the original loss $L$ plus a very simple term, a separable quadratic function:
\begin{equation}
  \label{e:Lstep-SGD}
  \min_{\w}{ Q(\w) = L(\w) + \frac{\mu}{2} \norm{\w - \w'}^2 }
\end{equation}
where $\w' = \bDelta(\bTheta)$ for the QP~\eqref{e:QP} and $\w' = \bDelta(\bTheta) - \smash{\frac{1}{\mu}} \blambda$ for the AL~\eqref{e:augLag}. Intuitively, optimizing~\eqref{e:Lstep-SGD} should not be very different from optimizing the loss (which we had to do in order to obtain the reference model). Indeed, gradient-based optimization is straightforward, since the gradient of~\eqref{e:Lstep-SGD} simply adds $\mu (\w-\w')$ to the gradient of the loss. Many optimization algorithms can be used to solve this depending on the form of $L$, such as a modified Newton method with line searches or even solving a linear system if $L$ is quadratic. However, for large-scale problems (large datasets and/or large dimension of \w) we are particularly interested in gradient-based optimization without line searches, such as gradient descent with a fixed step size, or stochastic gradient descent (SGD) with learning rates (step sizes) satisfying a Robbins-Monro schedule. Convergence without line searches requires certain conditions on the step size, and these must be corrected to account for the fact that the quadratic $\mu$-term increases as $\mu$ increases, since then the gradient term $\mu (\w-\w')$ also increases, and this can cause problems (such as large, undesirable jumps in early epochs of each new L step if using SGD). We consider two cases of optimizing the L step: using gradient descent with a fixed step size, and using SGD.

\subsubsection{Optimization of a convex loss using gradient descent with a fixed step size}

As is well known from convex optimization arguments (see proofs in appendix~\ref{s:L-step-learnrate:cvx}), if the loss $L(\w)$ is convex differentiable with Lipschitz continuous gradient with Lipschitz constant $M > 0$, then training the reference model can be done by gradient descent with a fixed step size $\smash{\frac{1}{M}}$. In the L step, the objective function~\eqref{e:Lstep-SGD} is strictly convex and therefore it has a unique minimizer. Gradient descent with a fixed step size $\smash{\frac{1}{M + \mu}}$, $\w_{t+1} = \w_{t} - \smash{\frac{1}{M + \mu}} \big( \nabla L(\w_{t}) + \mu (\w_{t} - \w') \big)$, converges to the minimizer linearly with rate $\smash{\frac{M}{M + \mu}} \in (0,1)$ from any initial point. Hence, we simply need to adjust the step size from $\smash{\frac{1}{M}}$ in the reference model to $\smash{\frac{1}{M + \mu}}$ in the L step. Although the step size becomes smaller as $\mu$ increases, the convergence becomes faster. The reason is that the objective function becomes closer to a separable quadratic function, whose optimization is easier.

\subsubsection{Optimization using stochastic gradient descent (SGD)}

With neural nets, which involve a nonconvex loss and a large dataset, SGD is the usual optimization procedure. The loss takes the form $L(\w) = \sum^N_{n=1}{ L_n(\w) }$ where $L_n$ is the loss for the $n$th training point and $N$ is large, so it is too costly to evaluate the gradient exactly. It is practically preferable to take approximate gradient steps based on evaluating the gradient at step $t$ on a minibatch $\calB_{t} \subset \{1,\dots,N\}$, hence each step can be seen as a gradient step using a noisy gradient:
\begin{equation}
  \label{e:SGD}
  \w_{t+1} = \w_{t} - \eta_{t} \nabla_{\calB_{t}} L(\w_{t}) \qquad \text{where} \qquad \nabla_{\calB_{t}} L(\w_{t}) = \sum_{n \in \calB_{t}}{ \nabla L_n(\w_{t}) } = \nabla L(\w_{t}) + \text{noise}_{t}.
\end{equation}
The convergence theorems for this stochastic setting are very different from those assuming exact gradients, most notably in the requirement that the step sizes (learning rates) $\eta_{t}$ must tend to zero, at a certain speed, as $t \rightarrow \infty$, which we call a \emph{Robbins-Monro schedule}:
\begin{equation}
  \label{e:RM-schedule}
  \text{Robbins-Monro schedule $\{\eta_{t}\}^{\infty}_{t=0}$:} \quad \eta_{t} > 0\ \forall t = 0,1,2\dots, \quad \sum^{\infty}_{t=0}{ \eta_{t} } = \infty, \quad \sum^{\infty}_{t=0}{ \eta^2_{t} } < \infty.
\end{equation}
Appendix~\ref{s:L-step-learnrate:SGD} gives detailed theorems with sufficient conditions for convergence (to a stationary point of the loss) in the case where the noise is deterministic (theorem~\ref{th:SGD-det}), in particular for the incremental gradient algorithm (theorem~\ref{th:SGD-incr-grad}), and when the noise is stochastic (theorem~\ref{th:SGD-stoch}). These conditions include Lipschitz continuity of the loss gradient, a condition on the noise to be small enough, and that the learning rates satisfy a Robbins-Monro schedule. The convergence rate is sublinear and much slower than with exact gradients (discussed in the previous section). In practice, the schedule typically has the form $\eta_{t} = \frac{\alpha}{\beta + t}$ where $\alpha,\beta > 0$ are determined by trial-and-error on a subset of the data. Unfortunately, the convergence theory for SGD is not very practical: apart from the conditions on the loss and the noise (which can be hard to verify but are mild), all the theory tells us is that using a Robbins-Monro schedule will lead to convergence. However, the performance of SGD is very sensitive to the schedule and selecting it well is of great importance. Indeed, SGD learning of neural nets is notoriously tricky and considerable trial and error in setting its hyperparameters (learning rate, momentum rate, minibatch size, etc.\@) is unavoidable in practice.

Consider now the $Q(\w)$ objective function~\eqref{e:Lstep-SGD} of the L step. The SGD updates take the form:
\begin{equation}
  \label{e:SGD-Lstep}
  \w_{t+1} = \w_{t} - \eta_{t} \left( \nabla_{\calB_{t}} L(\w_{t}) + \frac{\mu}{2} \norm{\w - \w'}^2 \right) \text{ where } \nabla_{\calB_{t}} L(\w_{t}) = \sum_{n \in \calB_{t}}{ \nabla L_n(\w_{t}) } = \nabla L(\w_{t}) + \text{noise}_{t}.
\end{equation}
Our concern is, given a good Robbins-Monro schedule $\{\eta_{t}\}^{\infty}_{t=0}$ for the reference model (i.e., for optimizing $L(\w)$ alone), should the schedule be modified for optimizing $Q$, and if so how? In theory, no change is needed, because the only condition that the convergence theorems require of the schedule is that it be Robbins-Monro. In practice, this requires some consideration. The addition of the $\mu$-term to the loss has two effects. First, since its exact gradient $\mu (\w-\w')$ is fast to compute, the noise in the gradient of $Q$ will (usually) be smaller, which is a good thing for convergence purposes. Second, and this is the practical problem, since $\mu$ increases towards infinity its gradient becomes progressively large%
\footnote{This makes the situation different from weight decay, which uses a similar objective function, of the form $L(\w) + \lambda \norm{\w}^2$, but where $\lambda$ is fixed and usually very small.}.
Hence, the \emph{early} weight updates in the L step (which use a larger learning rate) may considerably perturb \w, sending it away from the initial \w\ (provided by warm-start from the previous iteration's C step). While convergence to some minimizer (or stationary point in general) of $Q$ is still guaranteed with a Robbins-Monro schedule, this may be much slower and occur to a different local minimizer. This makes the overall optimization unstable and should be avoided.

We can solve these problems by using a schedule $\{\eta'_{t}\}^{\infty}_{t=0}$ that both satisfies the Robbins-Monro conditions and would lead to convergence of the $\mu$-term alone:
\begin{equation}
  \label{e:SGD-clip}
  \textcolor{blue}{\text{Clipped schedule $\{\eta'_{t}\}^{\infty}_{t=0}$:} \quad \eta'_{t} = \min{ \left( \eta_{t},\frac{1}{\mu} \right) },\ t = 0,1,2\dots}
\end{equation}
That is, the first iterations will use a learning rate $\smash{\frac{1}{\mu}}$, and then switch to the original $\eta_{t}$ schedule (when $\eta_{t} \le \smash{\frac{1}{\mu}}$). The justification is provided by the following two theorems.
\begin{thm}
  \label{th:quad-term-schedule}
  Consider minimizing a function $f(\w)$ using gradient descent with a fixed step size $\eta > 0$, i.e., we iterate $\w_{t+1} = \w_{t} - \eta \nabla f(\w_{t})$ for $t = 0,1,2\dots$ and some $\w_0$. If $f(\w) = \smash{\frac{\mu}{2} \norm{\w - \w'}^2}$, then that sequence converges linearly to the minimizer $\w'$ iff $\eta \in \big(0,\frac{2}{\mu}\big)$. Also, $\eta = \frac{1}{\mu}$ converges to the minimizer in one iteration.
\end{thm}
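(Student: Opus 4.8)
The plan is to reduce the dynamics to a scalar linear recursion on the error vector and read everything off from it. First I would compute $\nabla f(\w) = \mu(\w - \w')$, so the gradient-descent iteration becomes
$\w_{t+1} = \w_t - \eta\mu(\w_t - \w') = (1-\eta\mu)\,\w_t + \eta\mu\,\w'$.
Defining the error $\e_t \bydef \w_t - \w'$ and subtracting $\w'$ from both sides gives the one-line recurrence $\e_{t+1} = (1-\eta\mu)\,\e_t$, hence by induction $\e_t = (1-\eta\mu)^t\,\e_0$ and $\norm{\e_t} = \abs{1-\eta\mu}^t\,\norm{\e_0}$ for every $t$.

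From this closed form the characterization is immediate. If $\e_0 = \0$ the claim is trivial, so assume $\w_0 \neq \w'$. Then $\w_t \to \w'$ iff $\abs{1-\eta\mu}^t \to 0$, i.e.\ $\abs{1-\eta\mu} < 1$; since $\mu > 0$ this is equivalent to $0 < \eta\mu < 2$, i.e.\ $\eta \in \big(0,\tfrac{2}{\mu}\big)$. In that regime the convergence is linear with rate $\abs{1-\eta\mu}$, because $\norm{\e_{t+1}} = \abs{1-\eta\mu}\,\norm{\e_t}$ holds with equality. For the ``only if'' direction one also notes that outside this interval $\abs{1-\eta\mu} \ge 1$: when $\abs{1-\eta\mu} = 1$ the error norm stays pinned at $\norm{\e_0} > 0$, and when $\abs{1-\eta\mu} > 1$ it diverges; in neither case does the iterate reach $\w'$. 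Finally, setting $\eta = \tfrac{1}{\mu}$ makes the contraction factor $1-\eta\mu = 0$, so $\e_1 = \0$ and $\w_1 = \w'$, giving convergence in a single iteration.

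Honestly there is no real obstacle here: the whole argument is the observation that gradient descent on a quadratic is an affine map whose linear part is the scalar $(1-\eta\mu)$ times the identity. The only point needing a word of care is the boundary behaviour --- the ``only if'' half requires excluding the degenerate start $\w_0 = \w'$ and handling $\eta = \tfrac{2}{\mu}$ (oscillation at constant radius) separately from $\eta > \tfrac{2}{\mu}$ (divergence) --- together with the minor reading convention that the one-step case $\eta = \tfrac{1}{\mu}$ is regarded as a (trivial) instance of linear convergence with rate $0$ rather than as an exception to it.
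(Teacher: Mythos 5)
Your proof is correct and follows exactly the paper's argument: reduce the iteration to the affine map $\w_{t+1}=(1-\eta\mu)\w_t+\eta\mu\w'$, observe that the error contracts by the factor $1-\eta\mu$, and read off both the convergence condition $\abs{1-\eta\mu}<1$ and the one-step case $\eta=\tfrac{1}{\mu}$. You are in fact slightly more thorough than the paper, which only argues the ``if'' direction explicitly, whereas you also treat the boundary and divergent cases needed for the ``only if'' half.
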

\begin{proof}
  A gradient descent step with step size $\eta > 0$ is $\w_{t+1} = \w_{t} - \eta \mu (\w_{t} - \w') = (1-\eta\mu) \w_{t} + \eta\mu\w'$. So $\eta = \smash{\frac{1}{\mu}}$ yields $\w_{t+1} = \w'$. And $\w_{t+1} - \w' = (1-\eta\mu) (\w_{t} - \w') = (1-\eta\mu)_{t+1} (\w_0 - \w')$, which converges linearly to zero if $\abs{1-\eta\mu} < 1 \Leftrightarrow \eta \in \big(0,\smash{\frac{2}{\mu}}\big)$.
\end{proof}
\begin{thm}
  \label{th:clipped-RM}
  Consider a learning rate schedule $\{\eta_{t}\}^{\infty}_{t=0}$ that satisfies the Robbins-Monro conditions~\eqref{e:RM-schedule} and let $\mu > 0$. Then the schedule $\{\eta'_{t}\}^{\infty}_{t=0}$ with $\eta'_{t} = \min{ \big( \eta_{t},\frac{1}{\mu} \big) },\ t = 0,1,2\dots$ also satisfies~\eqref{e:RM-schedule}.
\end{thm}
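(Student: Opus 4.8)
The plan is to verify the three Robbins--Monro requirements in~\eqref{e:RM-schedule} for the clipped schedule $\{\eta'_{t}\}^{\infty}_{t=0}$ one at a time, using only two elementary observations: clipping a positive number from above keeps it positive and does not increase it, and square-summability of $\{\eta_{t}\}$ forces $\eta_{t}\to 0$, so the clipping is eventually inactive.

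The first two conditions are immediate. Positivity: $\eta'_{t} = \min{\big(\eta_{t},\frac{1}{\mu}\big)}$ is the minimum of two strictly positive numbers (since $\eta_{t} > 0$ and $\mu > 0$), hence $\eta'_{t} > 0$ for all $t$. Square-summability: from $0 < \eta'_{t} \le \eta_{t}$ we get $(\eta'_{t})^2 \le \eta_{t}^2$ termwise, so $\sum^{\infty}_{t=0}{(\eta'_{t})^2} \le \sum^{\infty}_{t=0}{\eta_{t}^2} < \infty$.

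The only condition with any content is the divergence $\sum_{t}{\eta'_{t}} = \infty$. Because $\sum_{t}{\eta_{t}^2} < \infty$, the terms $\eta_{t}^2$ tend to $0$ and hence $\eta_{t} \to 0$; therefore there exists an index $T$ with $\eta_{t} \le \frac{1}{\mu}$ for all $t \ge T$, so that $\eta'_{t} = \eta_{t}$ for every $t \ge T$. Then $\sum_{t \ge T}{\eta'_{t}} = \sum_{t \ge T}{\eta_{t}} = \infty$, since deleting finitely many terms of a series of positive terms does not affect whether it diverges; adding back the finite nonnegative initial sum $\sum^{T-1}_{t=0}{\eta'_{t}}$ leaves the total infinite. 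This settles all three checks. The main (very mild) obstacle is just noticing that the divergence argument needs $\eta_{t} \to 0$, and that this is handed to us for free by the square-summability hypothesis rather than being an extra assumption; everything else is routine bookkeeping.
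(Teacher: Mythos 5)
Your proof is correct and follows essentially the same route as the paper's: positivity and square-summability are immediate from $0<\eta'_{t}\le\eta_{t}$, and divergence follows because square-summability forces $\eta_{t}\to 0$, so the clipping is eventually inactive and the divergent tail of $\sum_t \eta_{t}$ is preserved. No gaps.
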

\begin{proof}
  We have by assumption that a) $\eta_{t} > 0$ $\forall t$, b) $\sum^{\infty}_{t=0}{ \eta_{t} } = \infty$ and c) $\sum^{\infty}_{t=0}{ \eta^2_{t} } < \infty$. Obviously, $\eta'_{t} > 0$ $\forall t$. From c) we have that $\lim_{t \rightarrow \infty}{\eta_{t}} = 0$, so there exists a $T \ge 0$ such that $\eta_{t} < \smash{\frac{1}{\mu}}$ $\forall t \ge T$, and $\sum^{\infty}_{t=T}{\eta_{t}} = \infty$. Hence $\sum^{\infty}_{t=T}{\eta'_{t}} = \sum^{\infty}_{t=T}{\eta_{t}} = \infty$, and $\sum^{\infty}_{t=0}{\eta'_{t}} = \infty$. Finally, since $0 < \eta'_{t} \le \eta_{t}$ $\forall t$, i.e., the sequence $(\eta'_{t})$ is majorized by the sequence $(\eta_{t})$, then $\sum^{\infty}_{t=0}{(\eta'_{t})^2} \le \sum^{\infty}_{t=0}{\eta^2_{t}} < \infty$.
\end{proof}
Theorem~\ref{th:quad-term-schedule} tells us we should use learning rates below $\smash{\frac{2}{\mu}}$ and suggests using $\smash{\frac{1}{\mu}}$ (particularly as $\mu$ increases, since then the $\mu$-term becomes dominant). Theorem~\ref{th:clipped-RM} guarantees that clipping a Robbins-Monro schedule remains Robbins-Monro. Hence, the clipped schedule makes sure that the initial, larger updates do not exceed $\smash{\frac{1}{\mu}}$ (the optimal step size for the $\mu$-term), and otherwise leaves it unchanged. This then ensures that the first steps do not unduly perturb the initial \w, while convergence to a minimum of $Q(\w)$ is guaranteed since the schedule is Robbins-Monro and $Q$ has Lipschitz continuous gradient if $L$ does (as long as the noise condition in the convergence theorems holds).

In a nutshell, our practical recommendation is as follows: first, we determine by trial and error a good schedule for the reference model (i.e., which drives the weight vector \w\ to close to a local minimizer $\overline{\w}$ of the loss $L$ as fast as possible). Then, we use the clipped schedule in the L step for $\mu > 0$. We have done this in experiments on various compression forms \citep{CarreirIdelbay17a,CarreirIdelbay17b} and found it effective.

\section{Relation of the LC algorithm with other algorithms}
\label{s:related-alg}

\subsection{One algorithm, many compression types}

We emphasize that the specific form of our LC algorithm follows necessarily from the definition of the compression technique in the constraints of problem~\eqref{e:compression-problem}. Some work on neural net compression is based on modifying the usual training procedure (typically backpropagation with SGD) by manipulating the weights on the fly in some ad-hoc way, such as binarizing or pruning them, but this usually has no guarantee that it will solve problem~\eqref{e:compression-problem}, or converge at all. In our framework, the LC algorithm (specifically, the C step) follows necessarily from the constraints that define the form of compression in~\eqref{e:compression-problem}. For example, for quantization \citep{CarreirIdelbay17a} and low-rank compression the C step results in $k$-means and the SVD, respectively, because the C step optimization ``$\smash{\min_{\bTheta}{ \norm{ \w - \bDelta(\bTheta) }^2 }}$'' takes the form of a quadratic distortion problem in both cases. For pruning using the $\ell_0$ norm \citep{CarreirIdelbay17b}, the optimization in the C step results in a form of weight magnitude thresholding. There is no need for any ad-hoc decisions in the algorithm.

\subsection{Direct compression and retraining approaches}
\label{s:DC}

Our formulation of what an optimal solution to the model compression problem is and the form of the LC algorithm allows us to put some earlier work into context.

\subsubsection{Direct compression approaches}

Direct compression (DC) consists of training the reference model and then compressing its weights. As shown in section~\ref{s:LC:DC}, DC corresponds to the beginning of the iterates' path in the LC algorithm, and is suboptimal, that is, it does not produce the compressed model with lowest loss. That said, direct compression is an appealing approach: it is an obvious thing to do, it is simple, and it is fast, because it does not require further training of a reference model (and hence no further access to the training set). Indeed, particular instances of DC corresponding to particular compression techniques have been recently applied to compress neural nets (although presumably much earlier attempts exist in the literature). These include quantizing the weights of the reference net with $k$-means \citep{Gong_15a}, pruning the weights of the reference net by zeroing small values \citep{Reed93a} or reducing the rank of the weight matrices of the reference net using the SVD \citep{Sainat_13a,Jaderb_14a,Denton_14a}. However, the LC algorithm is nearly as simple as direct compression: it can be seen as iterating the direct compression but with a crucial link between the L and C steps, the $\smash{\frac{\mu}{2} \norm{\w - \bDelta(\bTheta)}^2}$ term. Practically, this is not much slower, given that we have to train the reference model anyway. Since the C steps are usually negligible compared to the L steps, the LC algorithm behaves like training the reference model for a longer time.

\subsubsection{Retraining after direct compression}

As we mentioned in section~\ref{s:LC:DC}, the result of direct compression can have a large loss, particularly for large compression levels. A way to correct for this partially is to retrain the compressed model, and this has been a standard approach with neural net pruning \citep{Reed93a}. Here, we first train the reference net and then prune its weights in some way, e.g.\ by thresholding small-magnitude weights. This gives the direct compression model if using sparsity-inducing norms (see \citealp{CarreirIdelbay17b}). Finally, we optimize the loss $L$ again but only over the remaining, unpruned weights. This reduces the loss, often considerably. However, it loses the advantage of DC of not having to retrain the net (which requires access to the training set), and it is still suboptimal, since generally the set of weights that were pruned is not the set that would give the lowest loss. The LC algorithm consistently beats retraining for pruning, particularly for higher compression levels (see \citealp{CarreirIdelbay17b}).

\subsubsection{Iterated direct compression (iDC)}
\label{s:iDC}

Imagine we iterate the direct compression procedure. That is, we optimize $L(\w)$ to obtain $\overline{\w}$ and then compress it into $\bTheta^{\text{DC}}$. Next, we optimize $L(\w)$ again but initializing \w\ from $\bDelta(\bTheta^{\text{DC}})$, and then we compress it; etc. Our argument in section~\ref{s:LC:DC} implies that nothing would change after the first DC and we would simply cycle between $\overline{\w}$ and $\bDelta(\bTheta^{\text{DC}})$ forever. In fact, several DC iterations may be needed for this to happen, for two reasons. 1) With local optima of $L(\w)$, we might converge to a different optimum after the compression (see fig.~\ref{f:LC-illustration} plot 2). However, sooner rather than later this will end up cycling between a local optimum of $L(\w)$ and its compressed model. Still, this improves over the DC optimum. 2) A more likely reason in practice are inexact compression or learning steps. This implies the iterates never fully reach either $\overline{\w}$ or $\bDelta(\bTheta^{\text{DC}})$ or both, and keep oscillating forever in between. This is particularly the case if training neural nets with stochastic gradient descent (SGD), for which converging to high accuracy requires long training times.

We call the above procedure ``iterated direct compression (iDC)''. A version of this for quantization has been proposed recently (``trained quantization'', \citealp{Han_15a}), although without the context that our constrained optimization framework provides. In our experiments elsewhere \citep{CarreirIdelbay17a}, we verify that neither DC not iDC converge to a local optimum of problem~\eqref{e:compression-problem}, while our LC algorithm does.

\subsection{Other algorithms beyond model compression}
\label{s:MAC-PE}

\subsubsection{The method of auxiliary coordinates (MAC)}

Overall, we derive our LC algorithm by applying the following design pattern to solve the compression problem: 1) introducing auxiliary variables \w\ in eq.~\eqref{e:compression-problem}, 2) handling the constraints via penalty methods (QP or AL) in eqs.~\eqref{e:QP}--\eqref{e:augLag}, and 3) optimizing the penalized function using alternating optimization over the original variables \bTheta\ and the auxiliary variables \w\ in eqs.~\eqref{e:Lstep}--\eqref{e:Cstep}. This design pattern is similar to that used in the \emph{method of auxiliary coordinates (MAC)} for optimizing nested systems such as deep nets \citep{CarreirWang12a,CarreirWang14a}, i.e., involving functions of the form $\f(\x;\W) = \f_{K+1}(\dots \f_2(\f_1(\x;\W_1);\W_2)\dots;\W_{K+1})$, where \x\ is an input data point and \W\ are trainable parameters (e.g.\ weights in a deep net). Here, one introduces auxiliary coordinates per data point $\x_n$ of the form $\z_{k,n} = \f_k(\z_{k-1,n};\W_k)$ (where $\z_{0,n} \equiv \x_n$), for $k = 1,\dots,K$ and $n = 1,\dots,N$. Then, handling these constraints with a penalty method and applying alternating optimization yields the final algorithm. This alternates a ``maximization'' step that optimizes single-layer functions independently of each other (over the $\W_k$) with a ``coordination'' step that optimizes over the auxiliary coordinates (the $\z_{k,n}$) independently for each data point. Hence, in MAC the auxiliary coordinates are per data point and capture intermediate function values within the nested function, while in our LC algorithm the auxiliary variables duplicate the parameters of the model in order to separate learning from compression.

\subsubsection{Parametric embeddings}

MAC and the LC algorithm become identical in one interesting case: \emph{parametric embeddings}. A \emph{nonlinear embedding} algorithm seeks to map a collection of high-dimensional data points $\Y = (\y_1,\dots,\y_N)$ of $D \times N$ to a collection of low-dimensional projections $\X = (\x_1,\dots,\x_N)$ of $L \times N$ with $L < D$ such that distances or similarities between pairs of points $(\y_n,\y_m)$ are approximately preserved between the corresponding pairs of projections $(\x_n,\x_m)$. Examples of nonlinear embeddings are spectral methods such as multidimensional scaling \citep{BorgGroenen05a} or Laplacian eigenmaps \citep{BelkinNiyogi03b}, and truly nonlinear methods such as stochastic neighbor embedding (SNE) \citep{HintonRoweis03a}, $t$-SNE \citep{MaatenHinton08a} or the elastic embedding \citep{Carreir10a}. For example, the elastic embedding optimizes:
\begin{equation}
  \label{e:EE}
  E(\X) = \sum^N_{n,m=1}{a_{nm} \norm{\x_n-\x_m}^2} + \lambda \sum^N_{n,m=1}{\exp{(-\norm{\x_n-\x_m}^2)}}
\end{equation}
where $a_{nm}$ defines the similarity between $\y_n$ and $\y_m$ (the more positive $a_{nm}$ is, the more similar $\y_n$ and $\y_m$ are). Therefore, the first term attracts similar points, the second term repels all points, and the optimal embedding \X\ balances both forces (depending on the tradeoff parameter $\lambda > 0$). In a parametric embedding, we wish to learn a projection mapping $\F\mathpunct{:}\ \y \in \bbR^D \rightarrow \x \in \bbR^L$ rather than the projections $\y_1,\dots,\y_N$ themselves (so we can use \F\ to project a new point \y\ as $\F(\y)$). For the elastic embedding this means optimizing the following (where \F\ is a parametric mapping with parameters \bTheta, e.g.\ a linear mapping or a neural net):
\begin{equation}
  \label{e:parametric-EE}
  P(\bTheta) = \sum^N_{n,m=1}{a_{nm} \norm{\F(\y_n;\bTheta)-\F(\y_m;\bTheta)}^2} + \lambda \sum^N_{n,m=1}{\exp{\big(-\norm{\F(\y_n;\bTheta)-\F(\y_m;\bTheta)}^2\big)}}.
\end{equation}
To optimize this using MAC \citep{CarreirVladym15a}, we recognize the above as a nested mapping and introduce auxiliary coordinates $\z_n = \F(\y_n;\bTheta)$ for $n = 1,\dots,N$. The QP function is
\begin{equation}
  \label{e:PE:MAC-QP}
  Q(\Z,\bTheta;\mu) = E(\Z) + \frac{\mu}{2} \sum^N_{n=1}{ \norm{\z_n - \F(\y_n;\bTheta)}^2 } = E(\Z) + \frac{\mu}{2} \norm{\Z - \F(\Y;\bTheta)}^2
\end{equation}
and alternating optimization yields the following two steps:
\begin{itemize}
\item Over \Z, it has the form of a nonlinear embedding with a quadratic regularization:
  \begin{equation}
    \label{e:PE-Zstep}
    \min_{\Z}{ E(\Z) + \frac{\mu}{2} \norm{\Z - \F(\Y;\bTheta)}^2 }.
  \end{equation}
\item Over \bTheta, it has the form of a regression problem with inputs \Y\ and outputs \Z
  \begin{equation}
    \label{e:PE-Fstep}
    \min_{\bTheta}{ \sum^N_{n=1}{ \norm{\z_n - \F(\y_n;\bTheta)}^2 } }.
  \end{equation}
\end{itemize}
We can see this as an LC algorithm for model compression if we regard \Z\ as the uncompressed model (so $\overline{\Z} = \argmin_{\Z}{ E(\Z) }$ is the reference model) and \bTheta\ (or equivalently the projection mapping \F) as the compressed model. MAC and the LC algorithm coincide because in a parametric embedding each data point ($\y_n$) is associated with one parameter vector ($\z_n$). The decompression mapping is $\Z = \bDelta(\bTheta) = \F(\Y;\bTheta)$, which (approximately) recovers the uncompressed model by applying the projection mapping to the high-dimensional dataset. The compression step finds optimally the parameters \bTheta\ of \F\ via a regression fit. The learning step learns the regularized embedding \Z. ``Direct compression'' (called ``direct fit'' in \citealp{CarreirVladym15a}) fits \F\ directly to the reference embedding $\overline{\Z}$, which is suboptimal, and corresponds to the beginning of the path in the MAC or LC algorithm. Hence, in this view, \emph{parametric embeddings can be seen as compressed nonlinear embeddings}.

\section{Compression, generalization and model selection}
\label{s:gen}

In this paper we focus exclusively on compression as a mechanism to find a model having minimal loss and belonging to a set of compressed models, as precisely formulated in problem~\eqref{e:compression-problem}. However, generalization is an important aspect of compression, and we briefly discuss this.

Compression can also be seen as a way to prevent overfitting, since it aims at obtaining a smaller model with a similar loss to that of a well-trained reference model. This was noted early in the literature of neural nets, in particular pruning weights or neurons was seen as a way to explore different network architectures (see \citealp{Reed93a} and \citealp[ch.~9.5]{Bishop95a}). Soft weight-sharing \citep{NowlanHinton92a}, a form of soft quantization of the weights of a neural net, was proposed as a regularizer to make a network generalize better. More recently, weight binarization schemes have also been seen as regularizers \citep{Courbar_15a}.

Many recent papers, including our own work \citep{CarreirIdelbay17a,CarreirIdelbay17b}, report experimentally that the training and/or test error of compressed models is lower than that of the reference (as long as the compression level is not too large). Some papers interpret this as an improvement in the generalization ability of the compressed net. While this is to some extent true, there is a simpler reason for this (which we note in section~\ref{s:iDC}) that surely accounts for part of this error reduction: the reference model was not trained well enough, so that the continued training that happens while compressing reduces the error. This will generally be unavoidable in practice with large neural nets, because the difficulty in training them accurately will mean the reference model is close to being optimal, but never exactly optimal.

Model selection consists of determining the model type and size that achieves best generalization for a given task. It is a difficult problem in general, but much more so with neural nets because of the many factors that determine their architecture: number of layers, number of hidden units, type of activation function (linear, sigmoidal, rectified linear, maxpool\dots), type of connectivity (dense, convolutional\dots), etc. This results in an exponential number of possible architectures. \emph{Compression can be seen as a shortcut to model selection}, as follows. Instead of finding an approximately optimal architecture for the problem at hand by an expensive trial-and-error search, one can train a reference net that \emph{overestimates} the necessary size of the architecture (with some care to control overfitting). This gives a good estimate of the best performance that can be achieved in the problem. Then, one compresses this reference using a suitable compression scheme and a desired compression level, say pruning $p$\% of the weights or quantizing the weights using $\log_2{K}$ bits. Then, what our LC algorithm does is automatically search a subset of models of a given size (corresponding to the compression level). For example, the $\ell_0$-based pruning mechanism of \citet{CarreirIdelbay17b} uses a single $\kappa$ parameter (the total number of nonzero weights in the entire net) but implicitly considers all possible pruning levels for each layer of the net. This is much easier on the network designer than having to test multiple combinations of the number of hidden units in each layer. By running the LC algorithm at different compression levels $\kappa > 0$, one can determine the smallest model that achieves a target loss that is good enough for the application at hand. In summary, a good approximate strategy for model selection in neural nets is to train a large enough reference model and compress it as much as possible.

\section{Conclusion}
\label{s:concl}

We have described a general framework to obtain compressed models with optimal task performance based on casting compression, usually understood as a procedure, as constrained optimization in model parameter space. This accommodates many existing compression techniques and also sheds light on previous approaches that were derived procedurally and do not converge to an optimal compressed model, even if they are effective in practice. We have also given a general ``learning-compression'' (LC) algorithm, provably convergent under standard assumptions. The LC algorithm reuses two kinds of existing algorithms as a black-box, independently of each other: in the L step, \emph{a learning algorithm for the task loss} (such as SGD on the cross-entropy), which requires the training set, and whose form is independent of the compression technique; and in the C step \emph{a compression algorithm on the model parameters} (such as $k$-means or the SVD), whose form depends on the compression technique but is independent of the loss and training set. The L and C steps follow mathematically from the definition of the constrained problem; for example, the C step for quantization and low-rank compression results in $k$-means and the SVD, respectively, because the C step takes the form of a quadratic distortion problem in both cases. A model designer can try different losses or compression techniques by simply calling the appropriate routine in the L or C step.

In companion papers, we develop this framework for specific compression mechanisms and report experimental results that often exceed or are comparable with the published state of the art, but with the additional advantages of generality, simplicity and convergence guarantees. Because of this, we think our framework may be a useful addition to neural net toolboxes. Our framework also opens further research avenues that we are actively exploring.

\subsubsection*{Acknowledgements}

Work supported by NSF award IIS--1423515. I thank Yerlan Idelbayev (UC Merced) for useful discussions.

\appendix

\section{A convergence theorem for the quadratic-penalty method}
\label{s:QP-conv}

For reference, we quote a theorem from \citet{NocedalWright06a} that we use to prove convergence of the LC algorithm (using the QP) in our theorem~\ref{th:LC}.

Consider the equality-constrained problem
\begin{equation}
  \label{e:eq-constr}
  \min_{\x}{ f(\x) } \qquad \text{s.t.} \qquad \c(\x) = \0
\end{equation}
where $f\mathpunct{:}\ \bbR^n \rightarrow \bbR$ is continuously differentiable and $\c(\x) = (c_1(\x),\dots,c_m(\x))^T \in \bbR^m$ are $m$ equality constraints, also continuously differentiable. Define the quadratic-penalty function
\begin{equation}
  \label{e:eq-constr:QP}
  Q(\x;\mu) = f(\x) + \frac{\mu}{2} \sum^m_{i=1}{ c^2_i(\x) }
\end{equation}
where $\mu > 0$ is the penalty parameter. Assume we are given a sequence $0 < \mu_1 < \mu_2 < \dots < \infty$, a nonnegative sequence of tolerances $(\tau_k)$ with $(\tau_k) \rightarrow 0$ and an starting point $\x_0$. The quadratic-penalty method works by finding, at each iterate $k = 0,1,2\dots$, an approximate minimizer $\x_k$ of $Q(\x;\mu_k)$, starting at $\x_{k-1}$ and terminating when $\norm{ \nabla_{\x}{ Q(\x;\mu_k) } } \le \tau_k$ .

\begin{thm}
  Suppose that the tolerances and penalty parameters satisfy $(\tau_k) \rightarrow 0$ and $(\mu_k) \rightarrow \infty$. Then, if a limit point $\x^*$ of the sequence $(\x_k)$ is infeasible, it is a stationary point of the function $\smash{\norm{\c(\x)}}^2$. On the other hand, if a limit point $\x^*$ is feasible and the constraint gradients $\nabla c_i(\x^*)$, $i = 1,\dots,m$ are linearly independent, then $\x^*$ is a KKT point for the problem~\eqref{e:eq-constr}. For such points, we have for any infinite subsequence \calK\ such that $\lim_{k \in \calK}{\x_k} = \x^*$ that $\lim_{k \in \calK}{ \big( -\mu_k \, c_i(\x_k) \big) } = \lambda^*_i$, for $i = 1,\dots,m$, where $\blambda^*$ is the multiplier vector that satisfies the KKT conditions for problem~\eqref{e:eq-constr}.
\end{thm}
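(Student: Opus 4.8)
The plan is to read off everything from the gradient of the penalty function together with the approximate-stationarity hypothesis, then split on feasibility of the limit point. Write $A(\x) = \left( \nabla c_1(\x)\ \cdots\ \nabla c_m(\x) \right)$ for the $n \times m$ matrix of constraint gradients, so that $\nabla_{\x} Q(\x;\mu_k) = \nabla f(\x) + \mu_k A(\x) \c(\x)$; abbreviate $\rr_k := \nabla_{\x} Q(\x_k;\mu_k)$, so the hypothesis reads $\norm{\rr_k} \le \tau_k$. Fix an infinite subsequence $\calK$ with $\lim_{k \in \calK} \x_k = \x^*$ (one exists since $\x^*$ is a limit point). By continuity of $f$, $\c$ and their first derivatives, $\nabla f(\x_k) \to \nabla f(\x^*)$, $A(\x_k) \to A(\x^*)$ and $\c(\x_k) \to \c(\x^*)$ along $\calK$.

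For the infeasible case I would divide the stationarity bound by $\mu_k$: from $\norm{\nabla f(\x_k) + \mu_k A(\x_k) \c(\x_k)} \le \tau_k$ we get $\norm{\tfrac{1}{\mu_k}\nabla f(\x_k) + A(\x_k)\c(\x_k)} \le \tau_k/\mu_k$. Since $\nabla f(\x_k)$ is bounded along $\calK$ and $\mu_k \to \infty$, both $\tfrac{1}{\mu_k}\nabla f(\x_k)$ and $\tau_k/\mu_k$ tend to $0$, so in the limit $A(\x^*)\c(\x^*) = \0$. As $A(\x)\c(\x) = \tfrac12 \nabla_{\x}\norm{\c(\x)}^2$, this says exactly that $\x^*$ is a stationary point of $\norm{\c(\x)}^2$.

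For the feasible case, with the extra hypothesis that $\nabla c_1(\x^*),\dots,\nabla c_m(\x^*)$ are linearly independent (equivalently, $A(\x^*)$ has full column rank), I would extract the multipliers as follows. Set $\blambda_k := -\mu_k \c(\x_k)$, so the stationarity identity becomes $\nabla f(\x_k) - A(\x_k)\blambda_k = \rr_k$. Since $\det\!\big(A(\x)^T A(\x)\big)$ is continuous and nonzero at $\x^*$, for all large $k \in \calK$ the matrix $A(\x_k)^T A(\x_k)$ is invertible, and left-multiplying by $\big(A(\x_k)^T A(\x_k)\big)^{-1}A(\x_k)^T$ gives
\begin{equation*}
  \blambda_k = \big(A(\x_k)^T A(\x_k)\big)^{-1} A(\x_k)^T \big(\nabla f(\x_k) - \rr_k\big).
\end{equation*}
Every factor on the right converges along $\calK$ (continuity of matrix inversion at $\x^*$, $\nabla f(\x_k)\to\nabla f(\x^*)$, $\rr_k \to \0$), so $\blambda_k$ converges; calling the limit $\blambda^*$ yields precisely $\lim_{k\in\calK}\big(-\mu_k c_i(\x_k)\big) = \lambda^*_i$. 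Passing to the limit in $\nabla f(\x_k) - A(\x_k)\blambda_k = \rr_k$ gives $\nabla f(\x^*) = A(\x^*)\blambda^* = \sum_{i=1}^m \lambda^*_i \nabla c_i(\x^*)$, which together with feasibility $\c(\x^*) = \0$ is the KKT system for~\eqref{e:eq-constr}.

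The only genuinely delicate step is the feasible case: one must exhibit the multiplier vector and show it \emph{converges}, not merely stays bounded, and that is exactly where the linear-independence assumption enters, via the left-inverse $\big(A(\x_k)^T A(\x_k)\big)^{-1}A(\x_k)^T$ and continuity of inversion at $\x^*$. Everything else is continuity plus the two elementary manipulations of the bound $\norm{\rr_k}\le\tau_k$ — dividing through by $\mu_k$ when $\x^*$ is infeasible, keeping $\mu_k\c(\x_k)$ intact when it is feasible — together with the reminder that every conclusion is a statement along the chosen subsequence $\calK$, since $(\x_k)$ itself need not converge.
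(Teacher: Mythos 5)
Your proof is correct. Note that the paper does not actually prove this statement: it is quoted verbatim from Nocedal and Wright and the ``proof'' given is simply a citation to their Theorem~17.2. What you have written is a complete, self-contained argument, and it is essentially the standard one from that reference: the gradient identity $\nabla_{\x} Q(\x;\mu_k) = \nabla f(\x) + \mu_k A(\x)\c(\x)$, division by $\mu_k$ in the infeasible case to land on $A(\x^*)\c(\x^*) = \0$ (i.e., stationarity of $\norm{\c(\x)}^2$), and in the feasible case the left inverse $\big(A(\x_k)^T A(\x_k)\big)^{-1} A(\x_k)^T$, whose existence for large $k \in \calK$ is exactly what the linear-independence hypothesis buys, to solve for and pass to the limit in $\blambda_k = -\mu_k \c(\x_k)$. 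Your closed form $\blambda^* = \big(A(\x^*)^T A(\x^*)\big)^{-1} A(\x^*)^T \nabla f(\x^*)$ also makes transparent two points the theorem statement quietly relies on: the limit multiplier does not depend on which subsequence $\calK$ converging to $\x^*$ was chosen, and it is \emph{the} (unique) KKT multiplier at $\x^*$, uniqueness again being a consequence of full column rank of $A(\x^*)$. I see no gaps; the only hypotheses used are exactly those stated (continuous differentiability of $f$ and $\c$, $\tau_k \rightarrow 0$, $\mu_k \rightarrow \infty$, and LICQ at the feasible limit point).
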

\begin{proof}
  See \citet[theorem~17.2]{NocedalWright06a}.
\end{proof}
Note that the QP method does not use the Lagrange multipliers in any way; the fact that $-\mu_k \, c_i(\x_k)$ tends to the Lagrange multiplier for constraint $i$ is a subproduct of the fact that the iterates converge to a KKT point. The AL method improves over the QP precisely by capitalizing on the availability of those estimates of the Lagrange multipliers.

\section{Learning rates for the L step: theorems and proofs}
\label{s:L-step-learnrate}

\subsection{Optimization of a convex loss using gradient descent with a fixed step size}
\label{s:L-step-learnrate:cvx}

First we present a few well-known results about gradient-based optimization for convex functions, with a short proof if possible, and then apply them to our L step objective function~\eqref{e:Lstep-SGD}. 

\subsubsection{Convergence theorems}

A function $f\mathpunct{:}\ \bbR^n \rightarrow \bbR$ is convex iff $\forall \x,\y \in \bbR^n$ and $\lambda \in [0,1]$: $ f(\lambda \x + (1 - \lambda) \y) \le \lambda f(\x) + (1 - \lambda) f(\y)$ (and strictly convex if the inequality is strict). Let $f$ be convex and differentiable. We say $f$ is strongly convex with constant $l > 0$ if $\forall \x,\y \in \bbR^n$: $f(\y) \ge f(\x) + \nabla f(\x)^T (\y-\x) + \smash{\frac{1}{2}} l \smash{\norm{\y-\x}^2}$. A function $\G\mathpunct{:}\ \bbR^n \rightarrow \bbR^m$ is Lipschitz continuous with Lipschitz constant $M > 0$ if $\forall \x,\y \in \bbR^n$: $\norm{\G(\x) - \G(\y)} \le M \norm{\x - \y}$. All norms are Euclidean in this section. Most of the statements apply if the convex function is defined on a convex subset of $\bbR^n$. For further details, see a standard reference such as \citet{Nester04a}.

\begin{lemma}
  \label{lem:Lip-sum}
  Let $f_1,f_2\mathpunct{:}\ \bbR^n \rightarrow \bbR$ be Lipschitz continuous with respective Lipschitz constants $M_1,M_2 > 0$. Then $f = f_1 + f_2$ is Lipschitz continuous with Lipschitz constant $M_1 + M_2$.
\end{lemma}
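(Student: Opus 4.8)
The plan is to apply the triangle inequality directly to the definition of Lipschitz continuity. Fix arbitrary $\x,\y \in \bbR^n$. Write the difference $f(\x) - f(\y)$ as the sum $\big(f_1(\x) - f_1(\y)\big) + \big(f_2(\x) - f_2(\y)\big)$, and bound its absolute value by $\abs{f_1(\x) - f_1(\y)} + \abs{f_2(\x) - f_2(\y)}$ using the triangle inequality for $\abs{\cdot}$ on $\bbR$. Then invoke the Lipschitz hypothesis on each $f_i$ to bound the $i$-th summand by $M_i \norm{\x - \y}$, and factor out $\norm{\x-\y}$ to obtain the bound $(M_1 + M_2)\norm{\x-\y}$. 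Since $\x$ and $\y$ were arbitrary, this is exactly the statement that $f$ is Lipschitz continuous with Lipschitz constant $M_1 + M_2$.

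There is essentially no obstacle: the only ingredient beyond unwinding the definitions is the triangle inequality (the same argument works verbatim for vector-valued maps $\bbR^n \to \bbR^m$, replacing $\abs{\cdot}$ by $\norm{\cdot}$). Note that $M_1 + M_2$ need not be the smallest Lipschitz constant of $f$; the lemma only asserts it is \emph{a} valid one, which suffices for the later use in bounding the gradient of the L-step objective~\eqref{e:Lstep-SGD}, whose gradient is the sum of $\nabla L$ and the (linear, hence Lipschitz) gradient of the quadratic $\mu$-term.
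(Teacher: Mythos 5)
Your proof is correct and is essentially identical to the paper's: both decompose $f(\x)-f(\y)$ into the two summands, apply the triangle inequality, and then the Lipschitz bound on each $f_i$. Your remark that the argument extends verbatim to vector-valued maps matches how the paper actually uses the lemma (on gradients).
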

\begin{proof}
  $\forall \x,\y \in \bbR^n$: $\norm{f(\x) - f(\y)} = \norm{f_1(\x) - f_1(\y) + f_2(\x) - f_2(\y)} \le \norm{f_1(\x) - f_1(\y)} + \norm{f_2(\x) - f_2(\y)} \le (M_1+M_2) \norm{\x - \y}$, by applying the triangle inequality.
\end{proof}
\begin{lemma}
  \label{lem:cvx-diff}
  Let $f\mathpunct{:}\ \bbR^n \rightarrow \bbR$ be differentiable. Then $f$ is convex if and only if $\forall \x,\y \in \bbR^n$: $f(\y) \ge f(\x) + \nabla f(\x)^T (\y-\x)$.
\end{lemma}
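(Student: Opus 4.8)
The plan is to prove the two implications separately, using differentiability only to pass from difference quotients to the gradient; everything else is elementary rearrangement.

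For the forward direction, suppose $f$ is convex. I would fix $\x,\y \in \bbR^n$ and, for $\lambda \in (0,1]$, apply the convexity inequality at the point $\x + \lambda(\y-\x) = \lambda\y + (1-\lambda)\x$ to obtain $f(\x+\lambda(\y-\x)) \le \lambda f(\y) + (1-\lambda)f(\x)$. Rearranging gives $\frac{f(\x+\lambda(\y-\x)) - f(\x)}{\lambda} \le f(\y) - f(\x)$. The left-hand side is a one-sided difference quotient whose limit as $\lambda \to 0^+$ is the directional derivative of $f$ at $\x$ along $\y-\x$; since $f$ is differentiable this limit equals $\nabla f(\x)^T(\y-\x)$. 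Taking $\lambda \to 0^+$ yields $\nabla f(\x)^T(\y-\x) \le f(\y) - f(\x)$, which is the claimed inequality.

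For the converse, suppose the gradient inequality holds for every pair of points. I would fix $\x,\y \in \bbR^n$ and $\lambda \in [0,1]$, set $\z = \lambda\x + (1-\lambda)\y$, and apply the hypothesis with base point $\z$ and targets $\x$ and $\y$ in turn: $f(\x) \ge f(\z) + \nabla f(\z)^T(\x-\z)$ and $f(\y) \ge f(\z) + \nabla f(\z)^T(\y-\z)$. Multiplying the first by $\lambda$, the second by $1-\lambda$, and adding, the gradient terms combine into $\nabla f(\z)^T(\lambda\x + (1-\lambda)\y - \z) = \nabla f(\z)^T\0 = 0$, leaving $\lambda f(\x) + (1-\lambda)f(\y) \ge f(\z) = f(\lambda\x + (1-\lambda)\y)$, which is exactly the convexity inequality.

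The only delicate point — and hence the main thing to get right — is the limit step in the forward direction: one must justify that the difference quotient along the segment converges and that its limit is $\nabla f(\x)^T(\y-\x)$, which is precisely where differentiability (rather than mere continuity) of $f$ is used. An alternative is to restrict $f$ to the line $t \mapsto f(\x + t(\y-\x))$ and invoke the one-dimensional fact that a differentiable convex function lies above its tangent line, but the direct difference-quotient argument is the shortest.
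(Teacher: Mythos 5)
Your proof is correct and follows essentially the same route as the paper's: the forward direction rearranges the convexity inequality into a difference quotient along the segment and lets $\lambda \to 0^+$ to recover $\nabla f(\x)^T(\y-\x)$, and the converse applies the gradient inequality at $\z = \lambda\x + (1-\lambda)\y$ toward $\x$ and $\y$ and takes the convex combination so the gradient terms cancel. Nothing further is needed.
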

\begin{proof}
  ($\Rightarrow$) Let $\x,\y \in \bbR^n$. Since $f$ is convex, we have $\forall \lambda \in [0,1]$: $(1 - \lambda) f(\x) + \lambda f(\y) \ge f( (1 - \lambda) \x + \lambda \y ) \Rightarrow f(\y) \ge f(\x) + \frac{1}{\lambda} \big( f(\x + \lambda (\y - \x)) - f(\x) \big)$, which tends to $f(\x) + \nabla f(\x)^T (\y-\x)$ as $\lambda \rightarrow 0$. \\
  ($\Leftarrow$) Let $\x,\y \in \bbR^n$, $\lambda \in [0,1]$ and $\z = \lambda \x + (1 - \lambda) \y$. Then, applying the assumption to $(\z,\x)$ and to $(\z,\y)$, we get the following two inequalities: $f(\x) \ge f(\z) + \nabla f(\z)^T (\x-\z)$ and $f(\y) \ge f(\z) + \nabla f(\z)^T (\y-\z)$. Multiplying the first by $\lambda$, the second by $1 - \lambda$ and summing, we obtain $\lambda f(\x) + (1 - \lambda) f(\y) \ge f(\z) = f(\lambda \x + (1 - \lambda) \y)$.
\end{proof}
\begin{lemma}
  \label{lem:cvx+Lip}
  Let $f\mathpunct{:}\ \bbR^n \rightarrow \bbR$ be convex and continuously differentiable, and $\nabla f\mathpunct{:}\ \bbR^n \rightarrow \bbR^n$ be Lipschitz continuous with Lipschitz constant $M > 0$. Then $\forall \x,\y \in \bbR^n$: $f(\y) \le f(\x) + \nabla f(\x)^T (\y-\x) + \frac{1}{2} M \norm{\y-\x}^2$.
\end{lemma}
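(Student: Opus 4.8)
The plan is to prove this standard quadratic upper bound (the ``descent lemma'') by reducing to a one‑dimensional integral along the segment joining \x\ and \y. First I would introduce the auxiliary function $g(t) = f(\x + t(\y - \x))$ for $t \in [0,1]$. Since $f$ is continuously differentiable, $g$ is differentiable with $g'(t) = \nabla f(\x + t(\y-\x))^T (\y-\x)$, and the map $t \mapsto g'(t)$ is continuous on $[0,1]$, so the fundamental theorem of calculus gives
\[
  f(\y) - f(\x) = g(1) - g(0) = \int_0^1 \nabla f(\x + t(\y-\x))^T (\y-\x)\,dt .
\]

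Next I would subtract the linear term $\nabla f(\x)^T(\y-\x) = \int_0^1 \nabla f(\x)^T(\y-\x)\,dt$, obtaining
\[
  f(\y) - f(\x) - \nabla f(\x)^T(\y-\x) = \int_0^1 \big( \nabla f(\x + t(\y-\x)) - \nabla f(\x) \big)^T (\y-\x)\,dt .
\]
Then I would bound the integrand using the Cauchy--Schwarz inequality and the $M$‑Lipschitz hypothesis on $\nabla f$:
\[
  \big( \nabla f(\x + t(\y-\x)) - \nabla f(\x) \big)^T (\y-\x)
  \le \norm{\nabla f(\x + t(\y-\x)) - \nabla f(\x)}\,\norm{\y-\x}
  \le M\,\norm{t(\y-\x)}\,\norm{\y-\x} = Mt\,\norm{\y-\x}^2 .
\]
Integrating $\int_0^1 Mt\,dt = \frac{1}{2}M$ then yields $f(\y) \le f(\x) + \nabla f(\x)^T(\y-\x) + \frac{1}{2}M\norm{\y-\x}^2$, as claimed.

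There is essentially no obstacle here: convexity is not even needed for this direction (the estimate holds for any $C^1$ function with $M$‑Lipschitz gradient, which is consistent with Lemma~\ref{lem:Lip-sum} being stated for arbitrary Lipschitz functions), and the only point requiring a word of justification is the use of the fundamental theorem of calculus for $g$, which is immediate from continuity of $g'$ on $[0,1]$. An alternative, if one prefers to avoid the integral representation, is to apply the mean value theorem to $g$ to get $f(\y)-f(\x) = g'(\xi)$ for some $\xi \in (0,1)$ and then estimate $g'(\xi) - \nabla f(\x)^T(\y-\x)$ directly by Cauchy--Schwarz and Lipschitzness; but the integral route is the cleanest and requires no extra hypotheses.
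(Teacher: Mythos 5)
Your proof is correct; the paper itself gives no argument here but simply cites \citet[lemma~1.2.3]{Nester04a}, and your integral-plus-Cauchy--Schwarz derivation is precisely the standard proof found there (and you are right that convexity is not needed for this inequality). One small caveat on your aside: the mean-value-theorem alternative would only yield the constant $M$ rather than $\frac{1}{2}M$, since you cannot control where $\xi \in (0,1)$ falls, so the integral route is not merely cleaner but actually necessary for the stated constant.
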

\begin{proof}
  See \citet[lemma~1.2.3]{Nester04a}.
\end{proof}
\begin{lemma}
  \label{lem:cvx+quad}
  Let $f\mathpunct{:}\ \bbR^n \rightarrow \bbR$ be convex. Then $F(\x) = f(\x) + \frac{\mu}{2} \norm{\x - \aa}^2$ where $\aa \in \bbR^n$ and $\mu > 0$ is strongly convex with constant $\mu$.
\end{lemma}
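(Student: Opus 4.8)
The plan is to reduce the claim to the paper's own definition of strong convexity: $F$ is strongly convex with constant $\mu$ precisely when $F(\y) \ge F(\x) + \nabla F(\x)^T(\y-\x) + \frac{1}{2}\mu\norm{\y-\x}^2$ for all $\x,\y \in \bbR^n$. Writing $F = f + g$ with $g(\x) = \frac{\mu}{2}\norm{\x-\aa}^2$, I would treat the two summands separately. For $f$, which is convex and differentiable, Lemma~\ref{lem:cvx-diff} gives the first-order lower bound $f(\y) \ge f(\x) + \nabla f(\x)^T(\y-\x)$. For $g$, a direct expansion gives the \emph{exact} identity $g(\y) = g(\x) + \nabla g(\x)^T(\y-\x) + \frac{\mu}{2}\norm{\y-\x}^2$, with no loss in the inequality, since $g$ is a quadratic with Hessian $\mu\I$ and its second-order Taylor expansion is exact.

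I would then add the two relations term by term. By linearity of the gradient $\nabla F = \nabla f + \nabla g$, the first-order terms combine into $\nabla F(\x)^T(\y-\x)$, the constant terms into $F(\x)$, and the only quadratic contribution is the $\frac{\mu}{2}\norm{\y-\x}^2$ coming from $g$. The resulting inequality is exactly the defining property of $\mu$-strong convexity, so the proof concludes.

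If one prefers not to invoke differentiability (the hypothesis only says ``$f$ convex''), an alternative route uses the characterization that $F$ is $\mu$-strongly convex iff $F - \frac{\mu}{2}\norm{\cdot}^2$ is convex; expanding, $F(\x) - \frac{\mu}{2}\norm{\x}^2 = f(\x) - \mu\,\aa^T\x + \frac{\mu}{2}\norm{\aa}^2$, which is $f$ plus an affine function and hence convex. Equivalently one can argue straight from the convex-combination inequality, using the parallelogram-type identity $\norm{\lambda\x + (1-\lambda)\y - \aa}^2 = \lambda\norm{\x-\aa}^2 + (1-\lambda)\norm{\y-\aa}^2 - \lambda(1-\lambda)\norm{\x-\y}^2$ for the quadratic part together with plain convexity of $f$.

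There is no genuinely hard step here: the whole content is that ``convex plus a multiple of $\norm{\cdot-\aa}^2$ is strongly convex,'' and both routes make this precise in a line or two. The only thing to watch is consistency with whichever notion of strong convexity is in force at this point in the text — the gradient inequality (which presupposes $F$ differentiable, fine if $f$ is taken differentiable as in the neighbouring lemmas) versus the differentiability-free ``shifted function is convex'' characterization. I would fix that choice once at the start and follow the matching argument.
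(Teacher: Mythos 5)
Your primary argument is exactly the paper's proof: decompose $F = f + g$ with $g(\x) = \frac{\mu}{2}\norm{\x-\aa}^2$, apply the first-order convexity inequality to $f$, note that the quadratic identity for $g$ is exact, and sum. Your side remark about differentiability is well taken --- the lemma's hypothesis says only ``convex'' while both the paper's proof and the paper's definition of strong convexity presuppose $\nabla f$ exists --- but this does not change the substance, and your proof is correct as the paper intends it.
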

\begin{proof}
  Call $g(\x) = \frac{\mu}{2} \norm{\x - \aa}^2$ so $F = f + g$. Then, since $f$ is convex: $f(\y) \ge f(\x) + \nabla f(\x)^T (\y-\x)$ $\forall \x,\y \in \bbR^n$, and one can verify by substitution that $g(\y) = g(\x) + \nabla \smash{g(\x)^T (\y-\x)} + \smash{\frac{1}{2} \mu \norm{\y-\x}^2}$ $\forall \x,\y \in \bbR^n$. Summing both expressions we get $F(\y) \ge F(\x) + \nabla F(\x)^T (\y-\x) + \frac{1}{2} \mu \norm{\y-\x}^2$ $\forall \x,\y \in \bbR^n$.
\end{proof}
\begin{thm}
  \label{th:cvx-graddesc}
  Let $f\mathpunct{:}\ \bbR^n \rightarrow \bbR$ be strongly convex with constant $m > 0$ and continuously differentiable, and let its gradient $\nabla f$ be Lipschitz continuous with Lipschitz constant $M > 0$. Given any $\x^0 \in \bbR^n$, define the sequence $\x_{t+1} = \x_{t} - \frac{1}{M} \nabla f(\x_{t})$ for $t = 0,1,2\dots$ Then $f$ has a unique global minimizer $\x^* \in \bbR^n$ and $f(\x_{t}) - f(\x^*) \le \smash{\left( 1 - \frac{m}{M} \right)_{t}} (f(\x^0) - f(\x^*))$ for $t = 0,1,2\dots$
\end{thm}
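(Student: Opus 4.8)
The plan is to run the textbook analysis of gradient descent for strongly convex functions: combine the ``descent lemma'' (Lemma~\ref{lem:cvx+Lip}), which controls how much one step can decrease $f$, with a gradient-domination inequality (of Polyak--{\L}ojasiewicz type) extracted from strong convexity, which lower-bounds $\norm{\nabla f(\x_t)}^2$ by the suboptimality gap. Multiplying the two estimates produces a geometric recursion on $f(\x_t) - f(\x^*)$.

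First I would settle existence and uniqueness of $\x^*$. Strong convexity with constant $m$ gives $f(\y) \ge f(\x) + \nabla f(\x)^T(\y-\x) + \frac{m}{2}\norm{\y-\x}^2$ for all $\x,\y$; fixing $\x=\x^0$, the right-hand side tends to $+\infty$ as $\norm{\y}\to\infty$, so the sublevel set $\{\y\mathpunct{:}\ f(\y)\le f(\x^0)\}$ is bounded and closed, hence compact, and the continuous $f$ attains its infimum there; uniqueness follows from strict convexity (which strong convexity implies). I would also record the harmless fact $m\le M$: the quantity $f(\y)-f(\x)-\nabla f(\x)^T(\y-\x)$ lies between $\frac{m}{2}\norm{\y-\x}^2$ (strong convexity) and $\frac{M}{2}\norm{\y-\x}^2$ (Lemma~\ref{lem:cvx+Lip}), so $1-\frac{m}{M}\in[0,1)$ and the claimed contraction factor is meaningful.

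The core has two ingredients. Ingredient one, the per-step decrease: apply Lemma~\ref{lem:cvx+Lip} with $\x=\x_t$ and $\y=\x_{t+1}=\x_t-\frac{1}{M}\nabla f(\x_t)$, so that $\nabla f(\x_t)^T(\y-\x)=-\frac{1}{M}\norm{\nabla f(\x_t)}^2$ and $\frac{M}{2}\norm{\y-\x}^2=\frac{1}{2M}\norm{\nabla f(\x_t)}^2$, giving
\[
  f(\x_{t+1}) \le f(\x_t) - \frac{1}{2M}\norm{\nabla f(\x_t)}^2 .
\]
Ingredient two, gradient domination: the strong-convexity lower bound $g(\y)=f(\x_t)+\nabla f(\x_t)^T(\y-\x_t)+\frac{m}{2}\norm{\y-\x_t}^2$ is a quadratic in $\y$ minimized at $\y=\x_t-\frac{1}{m}\nabla f(\x_t)$, with minimum value $f(\x_t)-\frac{1}{2m}\norm{\nabla f(\x_t)}^2$; since $g(\y)\le f(\y)$ for all $\y$, this minimum is $\le f(\x^*)$, whence $\norm{\nabla f(\x_t)}^2 \ge 2m\,(f(\x_t)-f(\x^*))$. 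Substituting into ingredient one yields $f(\x_{t+1})-f(\x^*)\le \big(1-\frac{m}{M}\big)(f(\x_t)-f(\x^*))$, and iterating from $t=0$ gives the stated bound.

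I do not expect a genuine obstacle: every step is a one-line computation. The only point that needs a moment's thought is ingredient two --- one must resist trying to manipulate $f$ directly and instead minimize its quadratic lower bound over $\y$, which is precisely the trick that converts strong convexity into a bound on $\norm{\nabla f(\x_t)}^2$ in terms of the suboptimality gap. (The same pattern will be reused in the L-step application, where Lemma~\ref{lem:cvx+quad} supplies the requisite strong convexity of $Q$ with constant $\mu$ and Lemma~\ref{lem:Lip-sum} supplies the Lipschitz constant $M+\mu$ of its gradient.)
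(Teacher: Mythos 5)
Your proposal is correct and follows essentially the same route as the paper's proof: the descent inequality $f(\x_{t+1}) \le f(\x_t) - \frac{1}{2M}\norm{\nabla f(\x_t)}^2$ from Lemma~\ref{lem:cvx+Lip}, combined with the bound $\norm{\nabla f(\x_t)}^2 \ge 2m\,(f(\x_t)-f(\x^*))$ obtained by minimizing the strong-convexity quadratic lower bound over $\y$. Your additional justification of existence/uniqueness of $\x^*$ via coercivity, and the remark that $m \le M$, are fine refinements of points the paper states without proof.
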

\begin{proof}
  Since $f$ is strongly convex it has a unique minimizer $\x^* \in \bbR^n$. From lemma~\ref{lem:cvx+Lip}:
  \begin{equation*}
    \forall \x,\y \in \bbR^n\mathpunct{:}\ f(\y) \le f(\x) + \nabla f(\x)^T (\y-\x) + \frac{1}{2} M \norm{\y-\x}^2.
  \end{equation*}
  By differentiating wrt \x, we see the RHS is minimal at $\y - \x = - \frac{1}{M} \nabla f(\x)$ and $f(\y) \le f(\x) - \frac{1}{2M} \norm{\nabla f(\x)}^2$. Applying this to $\x = \x_{t}$ and $\y = \x_{t+1}$, we get $\x_{t+1} = \x_{t} - \frac{1}{M} \nabla f(\x_{t})$ (a gradient descent step at $\x_{t}$ with step size $\frac{1}{M}$) and
  \begin{equation}
    \label{e:grad-decr}
    f(\x_{t+1}) \le f(\x_{t}) - \frac{1}{2M} \norm{\nabla f(\x_{t})}^2,
  \end{equation}
  which gives a lower bound on how much $f$ decreases from $\x_{t}$ to $\x_{t+1}$. Since $f$ is strongly convex:
  \begin{equation*}
    \forall \x,\y \in \bbR^n\mathpunct{:}\ f(\y) \ge f(\x) + \nabla f(\x)^T (\y-\x) + \frac{1}{2} m \norm{\y-\x}^2.
  \end{equation*}
  As a function of \y, the RHS is minimal at $\y = \x - \frac{1}{m} \nabla f(\x)$ and equals $f(\x) - \frac{1}{2m} \norm{\nabla f(\x)}^2$. Hence, $f(\y) \ge f(\x) - \smash{\frac{1}{2m} \norm{\nabla f(\x)}^2}$ $\forall \x,\y \in \bbR^n$. In particular, taking $\y = \x^*$ and $\x = \x_{t}$:
  \begin{equation*}
    f(\x^*) \ge f(\x_{t}) - \frac{1}{2m} \norm{\nabla f(\x_{t})}^2,
  \end{equation*}
  which gives an upper bound on how far $f(\x_{t})$ is from the minimum. Combining this with eq.~\eqref{e:grad-decr} we get $f(\x_{t+1}) - f(\x^*) \le \left( 1 - \frac{m}{M} \right) (f(\x_{t}) - f(\x^*))$ and the result follows.
\end{proof}
\begin{rmk}
  Theorem~\ref{th:cvx-graddesc} shows that, if $f$ is strongly convex with constant $m$ and its gradient is Lipschitz continuous with constant $M$, gradient descent with a constant step size $\frac{1}{M}$ converges linearly with rate $0 \le 1 - \frac{m}{M} < 1$ from any initial point.
\end{rmk}

\subsubsection{Application to the L step of the LC algorithm}

We now apply these results to our case. To train the reference model, we minimize the loss $L(\w)$. If we assume $L$ is convex differentiable with Lipschitz continuous gradient with Lipschitz constant $M > 0$, we could use gradient descent with a fixed step size $\smash{\frac{1}{M}}$ and have guaranteed convergence. In the L step, we minimize an objective function $Q(\w) = L(\w) + \smash{\frac{\mu}{2} \norm{\w - \w'}^2}$ with $\mu > 0$. We have that $Q$ is strongly convex with constant $\mu > 0$ (lemma~\ref{lem:cvx+quad}), so it has a unique minimizer, and its gradient is Lipschitz continuous with Lipschitz constant $M + \mu$ (lemma~\ref{lem:Lip-sum}). Hence, gradient descent on $Q$ with fixed step size $\smash{\frac{1}{M + \mu}}$, $\w_{t+1} = \w_{t} - \smash{\frac{1}{M + \mu}} \nabla Q(\w_{t})$, converges to the minimizer linearly with rate $1 - \smash{\frac{\mu}{M + \mu}} = \smash{\frac{M}{M + \mu}} \in (0,1)$ from any initial point. (If the loss is strongly convex with constant $0 < m < M$, then we have a faster rate of $\smash{\frac{M - m}{M + \mu}}$.)

\subsection{Optimization using stochastic gradient descent (SGD)}
\label{s:L-step-learnrate:SGD}

First we present some theorems about the convergence of SGD-type algorithms and then apply them to our L step objective function~\eqref{e:Lstep-SGD}. 

\subsubsection{Convergence theorems}

There is a large literature on convergence of SGD-type algorithms \citep{Pflug96a,Benven_90a,KushnerYin03a}, although the basic conditions for convergence are similar (Lipschitz continuity of the gradient, approximate descent search directions, bounded error if deterministic or unbiased error with bounded variance if stochastic, and Robbins-Monro step sizes). We quote theorems from \citet{BertsekTsitsik00a}, which are simple and consider the cases of deterministic errors, incremental gradient algorithm and stochastic errors.

Let $f\mathpunct{:}\ \bbR^n \rightarrow \bbR$ be a continuously differentiable function with Lipschitz continuous gradient $\nabla f$. Consider minimization of $f(\x)$ using the approximate descent method $\x_{t+1} = \x_{t} + \eta_{t} (\p_{t} + \e_{t})$ for $t = 0,1,2\dots$, where $\p_{t}$ is a search direction and $\e_{t}$ an error vector. We say that the step sizes are Robbins-Monro if they are positive and satisfy
\begin{equation}
  \label{e:SGD:RM}
  \sum^{\infty}_{t=0}{ \eta_{t} } = \infty, \qquad \sum^{\infty}_{t=0}{ \eta^2_{t} } < \infty.
\end{equation}

\begin{thm}[deterministic errors]
  \label{th:SGD-det}
  Let $\x_{t}$ be a sequence generated by the method $\x_{t+1} = \x_{t} + \eta_{t} (\p_{t} + \e_{t})$, where $\eta_{t}$ is a positive step size, $\p_{t}$ is a descent direction and $\e_{t}$ is a error vector. We assume the following:
  \begin{enumerate}
  \item There exist positive scalars $c_1$ and $c_2$ such that, for all $t$:
    \begin{equation}
      \label{e:SGD-det:dir}
      c_1 \norm{\nabla f(\x_{t})}^2 \le - \nabla f(\x_{t})^T \p_{t} \qquad \norm{\p_{t}} \le c_2 (1 + \norm{\nabla f(\x_{t})}).
    \end{equation}
  \item There exist positive scalars $p$ and $q$ such that, for all $t$:
    \begin{equation}
      \label{e:SGD-det:err}
      \norm{\e_{t}} \le \eta_{t} (q + p \norm{\nabla f(\x_{t})}).
    \end{equation}
  \item The step sizes are Robbins-Monro.
  \end{enumerate}
  Then either $f(\x_{t}) \rightarrow -\infty$ or else $f(\x_{t})$ converges to a finite value and $\lim_{t \rightarrow \infty}{ \nabla f(\x_{t}) } = \0$. Furthermore, every limit point of $\x_{t}$ is a stationary point of $f$.
\end{thm}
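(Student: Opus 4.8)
The plan is to run the classical Lyapunov-function (approximate-supermartingale) argument for inexact gradient methods: show that $f(\x_t)$, corrected by a summable tail, is eventually nonincreasing, extract summability of $\sum_t\eta_t\norm{\nabla f(\x_t)}^2$, and then convert this into $\nabla f(\x_t)\to\0$. First I would use the quadratic upper bound coming from Lipschitz continuity of $\nabla f$ — the same inequality as Lemma~\ref{lem:cvx+Lip}, whose proof does not actually need convexity — to write $f(\x_{t+1}) \le f(\x_t) + \nabla f(\x_t)^T(\x_{t+1}-\x_t) + \frac{M}{2}\norm{\x_{t+1}-\x_t}^2$. Substituting $\x_{t+1}-\x_t = \eta_t(\p_t+\e_t)$, bounding the inner product via assumption~\eqref{e:SGD-det:dir} (so $\nabla f(\x_t)^T\p_t \le -c_1\norm{\nabla f(\x_t)}^2$) and the error contributions via assumption~\eqref{e:SGD-det:err} (note $\norm{\e_t}$ is already of order $\eta_t$, so every term involving $\e_t$ carries an extra $\eta_t$), and using $\norm{\nabla f(\x_t)}\le\frac12(1+\norm{\nabla f(\x_t)}^2)$, $\norm{\p_t+\e_t}^2\le 2\norm{\p_t}^2+2\norm{\e_t}^2$ together with the growth bound on $\norm{\p_t}$ and $\eta_t\le\sup_s\eta_s<\infty$, a short computation yields constants $A,B>0$ (depending only on $M,c_1,c_2,p,q,\sup_s\eta_s$) with
\[
  f(\x_{t+1}) \le f(\x_t) - c_1\eta_t\norm{\nabla f(\x_t)}^2 + \eta_t^2\big(A + B\norm{\nabla f(\x_t)}^2\big).
\]
Since $\sum_t\eta_t^2<\infty$ forces $\eta_t\to0$, there is $T$ with $B\eta_t\le c_1/2$ for $t\ge T$, so the $B\norm{\nabla f(\x_t)}^2$ term is absorbed and $f(\x_{t+1}) \le f(\x_t) - \tfrac{c_1}{2}\eta_t\norm{\nabla f(\x_t)}^2 + A\eta_t^2$ for all $t\ge T$.

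Next I would introduce the corrected sequence $\phi_t = f(\x_t) + A\sum_{s\ge t}\eta_s^2$, which is finite because $\sum_s\eta_s^2<\infty$ and satisfies $\phi_{t+1}\le\phi_t - \tfrac{c_1}{2}\eta_t\norm{\nabla f(\x_t)}^2 \le \phi_t$ for $t\ge T$. Being eventually nonincreasing, $\phi_t$ converges to some $\phi_\infty\in[-\infty,\infty)$; since the tail sum tends to $0$, this is exactly the dichotomy ``$f(\x_t)\to-\infty$ or $f(\x_t)$ converges to a finite value.'' In the convergent case, telescoping the displayed inequality gives $\tfrac{c_1}{2}\sum_t\eta_t\norm{\nabla f(\x_t)}^2 \le \phi_T-\phi_\infty<\infty$, and combined with $\sum_t\eta_t=\infty$ this forces $\liminf_t\norm{\nabla f(\x_t)}=0$. (If $f(\x_t)\to-\infty$, the remaining claims are vacuous: a continuous real-valued $f$ cannot have a limit point along such a sequence, so there are no limit points to check.)

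The real work — and the step I expect to be the main obstacle — is upgrading $\liminf_t\norm{\nabla f(\x_t)}=0$ to $\lim_t\norm{\nabla f(\x_t)}=0$. I would argue by contradiction: if $\limsup_t\norm{\nabla f(\x_t)}=a>0$, fix $\epsilon\in(0,a/3)$ and look at the infinitely many ``excursions'' in which $\norm{\nabla f(\x_t)}$ climbs from below $\epsilon$ to above $2\epsilon$. The quantitative lever is that $\nabla f$ is Lipschitz and the steps are small: $\norm{\nabla f(\x_{t+1})-\nabla f(\x_t)}\le M\eta_t\norm{\p_t+\e_t}$, and on the stretches where $\norm{\nabla f(\x_t)}$ stays moderate the factor $\norm{\p_t+\e_t}$ is bounded by a fixed constant (via the growth bounds in the assumptions). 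Hence raising the gradient norm by $\epsilon$ requires the accumulated step length $\sum\eta_t$ over each excursion to exceed a fixed $\rho>0$; since on (the relevant part of) each excursion $\norm{\nabla f(\x_t)}\ge\epsilon$, each excursion contributes at least $\epsilon^2\rho$ to $\sum_t\eta_t\norm{\nabla f(\x_t)}^2$, and summing over infinitely many excursions contradicts $\sum_t\eta_t\norm{\nabla f(\x_t)}^2<\infty$. The fiddly part is choosing the excursion endpoints so the gradient norm stays in a controlled band throughout (using that, for $t$ large, a single small step cannot overshoot the band), which is routine bookkeeping but where most of the care goes. Finally, for the limit-point claim in the convergent case, any subsequence $\x_{t_k}\to\x^*$ gives $\nabla f(\x^*)=\lim_k\nabla f(\x_{t_k})=\0$ by continuity of $\nabla f$, so $\x^*$ is stationary.
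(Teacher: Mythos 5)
The paper does not prove this theorem at all: it is quoted verbatim from the literature, and the ``proof'' is simply a citation to Proposition~1 of \citet{BertsekTsitsik00a}. Your proposal correctly reconstructs the standard argument behind that proposition --- the descent lemma with the error terms absorbed into an $O(\eta_t^2)$ correction, the eventually-nonincreasing Lyapunov sequence $\phi_t$, summability of $\sum_t \eta_t\norm{\nabla f(\x_t)}^2$, and the excursion argument to upgrade $\liminf$ to $\lim$ --- so it is both correct and essentially the same route as the cited source.
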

\begin{proof}
  See proposition 1 in \citet{BertsekTsitsik00a}.
\end{proof}

This theorem can be particularized to the following important case: 1) $f$ has the form $f(\x) = \smash{\sum^m_{i=1}{f_i(\x)}}$ and each $f_i$ is continuously differentiable with Lipschitz continuous gradient. 2) We use the incremental gradient algorithm, where we cycle over $f_1,\dots,f_m$, each time updating $\x_{t+1}$ using the gradient of $f_i$ at $\x_{t}$. This corresponds to SGD using a minibatch of size 1 and without reshuffling the dataset at each epoch.

\begin{thm}[incremental gradient method]
  \label{th:SGD-incr-grad}
  Let $\x_{t}$ be a sequence generated by the incremental gradient method. Assume that for some positive constants $C$ and $D$, and all $i = 1,\dots,m$, we have
  \begin{equation}
    \label{e:SGD-incr-grad:err}
    \norm{\nabla f_i(\x)} \le C + D \norm{\nabla f(\x)} \qquad \forall \x \in \bbR^n.
  \end{equation}
  Assume that the step sizes are Robbins-Monro. Then either $f(\x_{t}) \rightarrow -\infty$ or else $f(\x_{t})$ converges to a finite value and $\lim_{t \rightarrow \infty}{ \nabla f(\x_{t}) } = \0$. Furthermore, every limit point of $\x_{t}$ is a stationary point of $f$.
\end{thm}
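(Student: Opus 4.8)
The plan is to derive this from Theorem~\ref{th:SGD-det} by recognizing one complete cycle of the incremental gradient method as a single approximate gradient step on $f = \sum_{i=1}^m f_i$. Denote the inner iterates of the $t$-th cycle by $\bpsi_0 = \x_{t}$ and $\bpsi_i = \bpsi_{i-1} - \eta_{t}\,\nabla f_i(\bpsi_{i-1})$ for $i = 1,\dots,m$ (the step size held fixed throughout the cycle), with $\x_{t+1} = \bpsi_m$. Then $\x_{t+1} = \x_{t} - \eta_{t}\sum_{i=1}^m \nabla f_i(\bpsi_{i-1})$, so setting $\p_{t} = -\nabla f(\x_{t})$ and $\e_{t} = -\sum_{i=1}^m\bigl(\nabla f_i(\bpsi_{i-1}) - \nabla f_i(\x_{t})\bigr)$ writes the cycle in the form $\x_{t+1} = \x_{t} + \eta_{t}(\p_{t} + \e_{t})$ demanded by Theorem~\ref{th:SGD-det}. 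Condition~\eqref{e:SGD-det:dir} then holds trivially with $c_1 = c_2 = 1$, since $-\nabla f(\x_{t})^T\p_{t} = \norm{\nabla f(\x_{t})}^2$ and $\norm{\p_{t}} = \norm{\nabla f(\x_{t})} \le 1 + \norm{\nabla f(\x_{t})}$.

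The work is in verifying the error bound~\eqref{e:SGD-det:err}. Let $L_i$ be a Lipschitz constant for $\nabla f_i$, so that $M = \sum_i L_i$ bounds the Lipschitz constant of $\nabla f$ (Lemma~\ref{lem:Lip-sum}, iterated), and let $\delta_i = \norm{\bpsi_i - \x_{t}}$. From the triangle inequality and Lipschitz continuity of each $\nabla f_i$, $\norm{\e_{t}} \le \sum_{i=1}^m L_i\,\delta_{i-1}$, so it suffices to control the drift $\delta_i$. Bounding $\delta_i \le \delta_{i-1} + \eta_{t}\norm{\nabla f_i(\bpsi_{i-1})}$, applying the growth hypothesis $\norm{\nabla f_i(\bpsi_{i-1})} \le C + D\norm{\nabla f(\bpsi_{i-1})}$, and then using Lipschitz continuity of $\nabla f$ to replace $\norm{\nabla f(\bpsi_{i-1})}$ by $\norm{\nabla f(\x_{t})} + M\delta_{i-1}$, gives the linear recursion $\delta_i \le (1 + \eta_{t} D M)\,\delta_{i-1} + \eta_{t}\bigl(C + D\norm{\nabla f(\x_{t})}\bigr)$ with $\delta_0 = 0$. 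Because $\sum_t \eta_t^2 < \infty$ forces $\eta_{t} \le \bar\eta$ for some constant $\bar\eta$, the growth factor is at most $\rho = 1 + \bar\eta D M$, and unrolling over the at most $m$ inner steps yields $\delta_{i-1} \le \eta_{t} K\bigl(C + D\norm{\nabla f(\x_{t})}\bigr)$ with $K = \sum_{j=0}^{m-1}\rho^j$. Substituting into the bound on $\norm{\e_{t}}$ produces $\norm{\e_{t}} \le \eta_{t}\bigl(q + p\,\norm{\nabla f(\x_{t})}\bigr)$ with $q = MKC$ and $p = MKD$, which is exactly~\eqref{e:SGD-det:err}.

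All three hypotheses of Theorem~\ref{th:SGD-det} now hold---the step sizes are Robbins-Monro by assumption and $f$ has Lipschitz continuous gradient---so that theorem gives either $f(\x_{t}) \to -\infty$ or $f(\x_{t})$ converging to a finite value with $\nabla f(\x_{t}) \to \0$; continuity of $\nabla f$ then forces $\nabla f = \0$ at every limit point, completing the proof.

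The main obstacle is the error estimate: the growth hypothesis controls $\nabla f_i$ by $\nabla f$ evaluated at the \emph{inner} iterate $\bpsi_{i-1}$ rather than at $\x_{t}$, so one is forced into the nested use of Lipschitz continuity of $\nabla f$ and the resulting discrete Gr\"onwall-type recursion for the drift $\delta_i$. A minor technical point worth noting is that the boundedness of $\eta_{t}$ used to turn $1 + \eta_{t} D M$ into a genuine constant comes for free from the square-summability half of the Robbins-Monro conditions, and that the step size should be held fixed (or nearly so) across the $m$ steps of a cycle for the identification with Theorem~\ref{th:SGD-det} to be clean.
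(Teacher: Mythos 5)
Your proof is correct. Note, though, that the paper itself does not prove this theorem: its ``proof'' is simply a citation to Proposition~2 of \citet{BertsekTsitsik00a}, just as Theorem~\ref{th:SGD-det} is a citation to their Proposition~1. What you have written is a genuine, self-contained reduction of the former to the latter, and it follows essentially the same route as the cited reference: treat one full cycle as a single approximate gradient step with $\p_{t}=-\nabla f(\x_{t})$, bound the within-cycle drift $\delta_i=\norm{\bpsi_i-\x_{t}}$ by a discrete Gr\"onwall recursion using the growth condition~\eqref{e:SGD-incr-grad:err} and Lipschitz continuity of the $\nabla f_i$, and conclude that the aggregated error satisfies~\eqref{e:SGD-det:err} with constants $q,p$ proportional to $C$ and $D$. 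The two small technical points you flag---that square-summability bounds $\eta_{t}$ so the growth factor $1+\eta_{t}DM$ can be replaced by a constant $\rho$, and that the step size is held fixed within a cycle---are exactly the ones needed to make the identification with Theorem~\ref{th:SGD-det} clean, so there is no gap.
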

\begin{proof}
  See proposition 2 in \citet{BertsekTsitsik00a}.
\end{proof}

Now we let the noise term $\e_{t}$ be stochastic. The $\sigma$-field $\calF_{t}$ should be interpreted as the history of the algorithm up to time $t$, just before $\e_{t}$ is generated, so that conditioning on $\calF_{t}$ represents conditioning on $\x_0,\p_0,\e_0,\dots,\x_{t-1},\p_{t-1},\e_{t-1},\x_{t},\p_{t}$.

\begin{thm}[stochastic errors]
  \label{th:SGD-stoch}
  Let $\x_{t}$ be a sequence generated by the method $\x_{t+1} = \x_{t} + \eta_{t} (\p_{t} + \e_{t})$, where $\eta_{t}$ is a deterministic positive step size, $\p_{t}$ is a descent direction and $\e_{t}$ is a random noise term. Let $\calF_{t}$ be an increasing sequence of $\sigma$-fields. We assume the following:
  \begin{enumerate}
    \setcounter{enumi}{-1}
  \item $\x_{t}$ and $\p_{t}$ are $\calF_{t}$-measurable.
  \item There exist positive scalars $c_1$ and $c_2$ such that, for all $t$:
    \begin{equation}
      \label{e:SGD-stoch:dir}
      c_1 \norm{\nabla f(\x_{t})}^2 \le - \nabla f(\x_{t})^T \p_{t} \qquad \norm{\p_{t}} \le c_2 (1 + \norm{\nabla f(\x_{t})}).
    \end{equation}
  \item We have, for all $t$ and with probability 1,
    \begin{equation}
      \label{e:SGD-stoch:mean-var}
      \mean{\e_{t}|\calF_{t}} = \0, \qquad \mean{\norm{\e^2_{t}}|\calF_{t}} \le A (1 + \norm{\nabla f(\x_{t})}^2),
    \end{equation}
    where $A$ is a positive deterministic constant.
  \item The step sizes are Robbins-Monro.
  \end{enumerate}
  Then, the following holds with probability 1: either $f(\x_{t}) \rightarrow -\infty$ or else $f(\x_{t})$ converges to a finite value and $\lim_{t \rightarrow \infty}{ \nabla f(\x_{t}) } = \0$. Furthermore, every limit point of $\x_{t}$ is a stationary point of $f$.
\end{thm}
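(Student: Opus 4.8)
The plan is to proceed exactly as for Theorems~\ref{th:SGD-det} and~\ref{th:SGD-incr-grad}: since this is a quoted result, the proof reduces to invoking the corresponding stochastic-errors proposition of \citet{BertsekTsitsik00a}, with their notation matched to ours ($f$, $\p_{t}$, $\e_{t}$, $\eta_{t}$ and $\calF_{t}$ playing the roles of theirs). For completeness I sketch below the classical almost-supermartingale argument underlying it, which is what one would carry out to prove the statement from scratch.

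First I would apply the standard descent lemma — the quadratic upper bound $f(\y) \le f(\x) + \nabla f(\x)^T (\y - \x) + \frac{M}{2} \norm{\y - \x}^2$, which follows from $M$-Lipschitz continuity of $\nabla f$ alone (cf.\ lemma~\ref{lem:cvx+Lip}). Writing $\x_{t+1} - \x_{t} = \eta_{t}(\p_{t} + \e_{t})$,
\begin{equation*}
  f(\x_{t+1}) \le f(\x_{t}) + \eta_{t} \nabla f(\x_{t})^T (\p_{t} + \e_{t}) + \frac{M}{2} \eta^2_{t} \norm{\p_{t} + \e_{t}}^2 .
\end{equation*}
Taking conditional expectation given $\calF_{t}$ and using assumption~0 ($\x_{t}$ and $\p_{t}$ are $\calF_{t}$-measurable), the zero-mean and bounded-variance conditions on $\e_{t}$, the descent inequality $c_1 \norm{\nabla f(\x_{t})}^2 \le -\nabla f(\x_{t})^T \p_{t}$, and the growth bound $\norm{\p_{t}} \le c_2 (1 + \norm{\nabla f(\x_{t})})$, one arrives at an inequality of the form
\begin{equation*}
  \mean{f(\x_{t+1})|\calF_{t}} \le f(\x_{t}) - \eta_{t} c_1 \norm{\nabla f(\x_{t})}^2 + K \eta^2_{t} \big( 1 + \norm{\nabla f(\x_{t})}^2 \big)
\end{equation*}
for a deterministic constant $K > 0$ depending only on $M$, $c_2$ and the variance constant $A$.

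Next I would deal with the possibility that $f$ is unbounded below. The conclusion explicitly permits the branch $f(\x_{t}) \rightarrow -\infty$, so it suffices to work on the complementary event, where one may apply the Robbins--Siegmund almost-supermartingale convergence lemma to $f(\x_{t})$ (shifted by a lower bound valid along the relevant sublevel set), using $\sum_{t}{\eta^2_{t}} < \infty$ to make the additive terms summable. This yields, with probability one, that $f(\x_{t})$ converges to a finite value and — the key consequence — that $\sum^{\infty}_{t=0}{\eta_{t} \norm{\nabla f(\x_{t})}^2} < \infty$.

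Finally, combining $\sum_{t}{\eta_{t} \norm{\nabla f(\x_{t})}^2} < \infty$ with $\sum_{t}{\eta_{t}} = \infty$ forces $\liminf_{t}{\norm{\nabla f(\x_{t})}} = 0$; upgrading this to $\norm{\nabla f(\x_{t})} \rightarrow 0$ again uses Lipschitz continuity of $\nabla f$ together with $\norm{\x_{t+1} - \x_{t}} \le \eta_{t}(\norm{\p_{t}} + \norm{\e_{t}}) \rightarrow 0$, so that $\norm{\nabla f}$ cannot oscillate between a neighborhood of $0$ and a value bounded away from $0$ infinitely often. Continuity of $\nabla f$ then makes every limit point of $(\x_{t})$ stationary. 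The main obstacle in a self-contained write-up is exactly these last two points — the bookkeeping for the not-bounded-below dichotomy and the $\liminf \rightarrow \lim$ upgrade — which are the technical heart of the Bertsekas--Tsitsiklis argument and the reason we invoke it rather than reproduce it here.
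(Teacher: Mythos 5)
Your proposal is correct and takes essentially the same route as the paper: the paper's proof is simply a citation of Proposition~3 in \citet{BertsekTsitsik00a}, which is exactly your main step. The additional almost-supermartingale sketch you give is a faithful outline of the underlying Bertsekas--Tsitsiklis argument, but the paper does not reproduce it either.
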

\begin{proof}
  See proposition 3 in \citet{BertsekTsitsik00a}.
\end{proof}

\subsubsection{Application to the L step of the LC algorithm}

The theorems give sufficient conditions for convergence to a stationary point (usually a minimizer). Their effect in the L step if using SGD are as follows:
\begin{enumerate}
\item The conditions~\eqref{e:SGD-det:dir} and~\eqref{e:SGD-stoch:dir} on the search direction $\p_{t}$ are always satisfied since it equals the gradient ($\p_{t} = -\nabla f(\x_{t})$ in the theorems).
\item The conditions on the error or noise~\eqref{e:SGD-det:err}, \eqref{e:SGD-incr-grad:err} and~\eqref{e:SGD-stoch:mean-var} are hard to verify in general but should hold in many practical cases. \\
  It is tempting to think that the condition should hold for the L step objective function $Q(\w) = L(\w) + \smash{\frac{\mu}{2} \norm{\w - \w'}^2}$ if it holds for the loss $L(\w)$, since the gradient for the $\mu$-term has no error, but this is not true generally. To see this, assume the error condition holds for a function $f$ and consider a function $f + g$, where $\nabla g$ has no error, so the error comes from $\nabla f$ only ($f$ and $g$ correspond to the loss and $\mu$-term, respectively). We could pick $g$ adversarially so that the error is small wrt $\nabla f$ but big wrt $\nabla (f+g)$ (e.g.\ if $g = -f$). Although this could be solved by placing some assumptions on $g$, we may just as well assume the error condition on $f+g$.
\item The condition on the step sizes is that they be Robbins-Monro.
\end{enumerate}
Consequently, the theorems hold for SGD minimization of $Q(\w) = L(\w) + \smash{\frac{\mu}{2} \norm{\w - \w'}^2}$ if the error on $\nabla Q$ satisfies~\eqref{e:SGD-det:err}, \eqref{e:SGD-incr-grad:err} or~\eqref{e:SGD-stoch:mean-var} and the step sizes are Robbins-Monro. Hence, there is neither a simplification nor a complication of minimizing $Q$ over minimizing $L$ with SGD: the convergence theory leaves the choice of step sizes up to the user as long as they are Robbins-Monro.


\end{document}